\documentclass[11pt]{article}

\usepackage[final]{acl}

\usepackage{times}
\usepackage{latexsym}

\usepackage[T1]{fontenc}

\usepackage[utf8]{inputenc}

\usepackage{microtype}

\usepackage{inconsolata}

\usepackage{graphicx}
\usepackage{wrapfig}
\usepackage{hyperref}
\usepackage{enumitem}
\usepackage{booktabs}
\usepackage{longtable}
\usepackage{lineno}
\usepackage{caption}
\usepackage{listings}
\usepackage{xcolor}

\usepackage{tabularx}
\usepackage{amsmath,amssymb,amsfonts}%
\usepackage{amsthm}%

\usepackage{natbib}
\usepackage{doi}
\usepackage{makecell}
\usepackage{mathbbol}
\usepackage{graphicx}
\usepackage{tabularx}
\usepackage{booktabs}
\usepackage{array}
\usepackage{colortbl}
\usepackage{pdflscape}
\usepackage{tabu}
\usepackage{threeparttable}
\usepackage{threeparttablex}
\usepackage[normalem]{ulem}
\usepackage{makecell}
\usepackage{multirow}%
\usepackage{multicol}%

\theoremstyle{plain}
\newtheorem{theorem}{Theorem}[section]
\newtheorem{proposition}[theorem]{Proposition}

\newtheorem{corollary}[theorem]{Corollary}
\theoremstyle{definition}
\newtheorem{definition}[theorem]{Definition}

\theoremstyle{remark}

\title{Knowledge without Wisdom: Measuring Misalignment between LLMs and Intended Impact}

\author{Michael Hardy \\
  Stanford University \\
  \texttt{hardym[}$\alpha\tau$\texttt{]stanford[}$\circ$\texttt{]edu} \\\And
  Yunsung Kim \\
  Stanford University \\
  \texttt{yunsung[}$\alpha\tau$\texttt{]stanford[}$\circ$\texttt{]edu} \\}

\begin{document}
\maketitle
\begin{abstract}

LLMs increasingly excel on AI benchmarks, but doing so does not guarantee validity for downstream tasks. This study contrasts LLM alignment on benchmarks, downstream tasks, and, importantly the intended impact of those tasks. We evaluate the performance of leading LLMs (i.e., generative pre-trained base models) on difficult-to-verify tasks of the teaching and learning of schoolchildren. Across all LLMs, inter-model behaviors on disparate tasks correlate higher than they do with expert human behaviors on target tasks. These biases shared across LLMs are poorly aligned with downstream measures of teaching quality and often \textit{negatively aligned with the intended impact} of student learning outcomes. Further, we find multi-model ensembles, both unanimous model voting and expert-weighting by benchmark performance, further exacerbate misalignment with learning. We measure that selection of LLM and/or prompting strategy only reliably accounts for 15\% of all measured misalignment error and that variation in misalignment error is shared across LLMs, suggesting that common pretraining accounts for much of the misalignment in these tasks. We demonstrate methods for robustly measuring alignment of complex tasks and provide unique insights into practical applications of LLMs in high-noise contexts.

\end{abstract}

\section{Introduction}
\begin{quote}
\textit{Where is the wisdom we have lost in knowledge? Where is the knowledge we have lost in information?} \citep{eliot_rock_1934}
\end{quote}

Large language models (LLMs) now exhibit striking competence on benchmarks that operationalize \emph{knowledge}: answering questions, reproducing domain vocabulary, and generating fluent explanations. We have also seen rapidly growing optimism about using LLMs for tasks that require more than static Q\&A, such as scientific discovery and hypothesis generation \citep{xin_towards_2025,yamada_ai_2025,gottweis_towards_2025,swanson_virtual_2025,cong_labos_2025}. Yet recent work in this area highlights failure modes that are not well-captured by standard evaluation: overconfidence in interpreting evidence, brittle multi-step inference, and performance plateaus that appear tied to shared pretraining distributions rather than to idiosyncratic architectures \citep{song_evaluating_2025,alampara_probing_2025,mirza_framework_2025,kim_correlated_2025,kalai_why_2025}. Such limitations increase when tasks move downstream, away from ``correct answers'' and toward some \emph{intended impact} in the world.
\begin{figure}
    \centering
    \includegraphics[width=\linewidth]{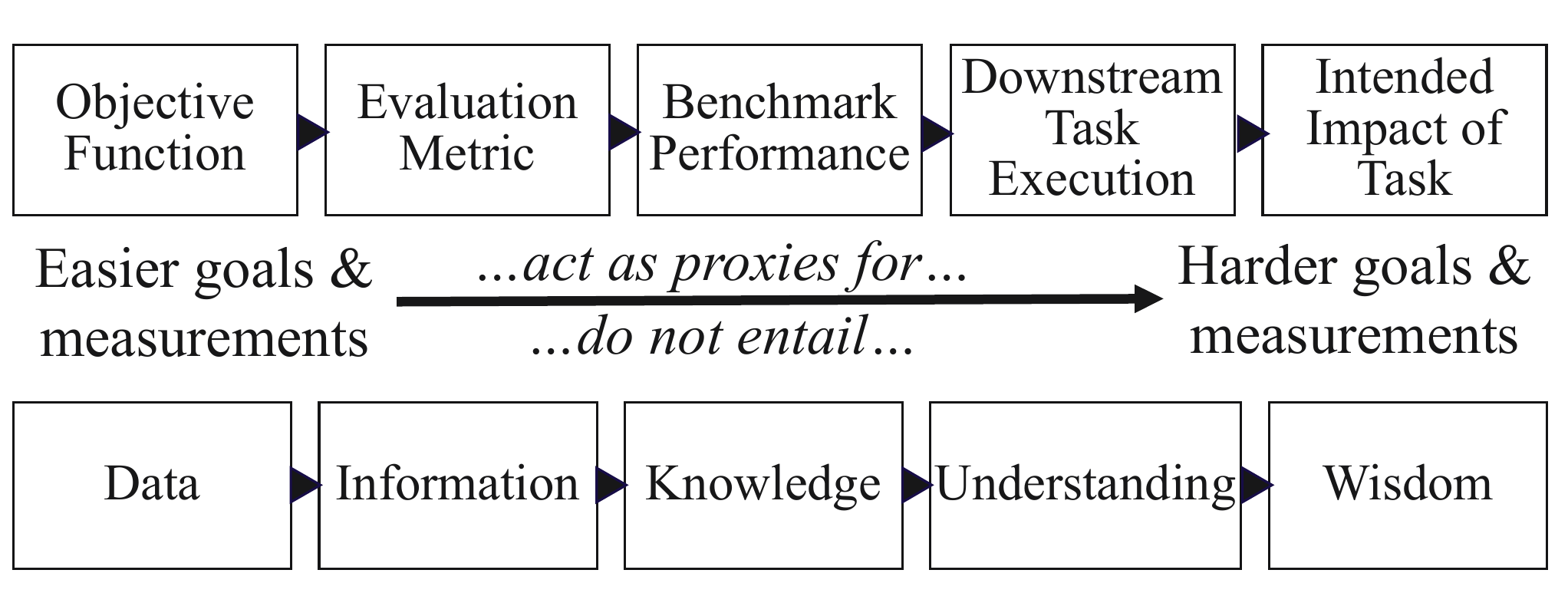}
    \caption{Cascading levels of inference found in LLM development and evaluation and a parallel adaption of Ackoff's progression \citep{ackoff_data_1989,rowley_wisdom_2007}}
    \label{fig:cascading}
\end{figure}

The research of LLMs-in-scientific-discovery studies illustrates a gap between benchmarks and downstream tasks that is not reserved for prestigious tasks that are difficult for scientific experts. This paper uses similarly rigorous methods to study what might be considered a simpler domain than frontier scientific discovery—elementary classrooms—but one that makes the same scientific point sharply: impressive language competence does not guarantee that a model's judgments align with the implied construct of interest. Classroom instruction is an archetypal high-stakes, high-noise setting where quality may be inferred from unstructured discourse text 
and where the ultimate objective is delayed: student learning. Yet, in ways similar to many application areas, annotated classroom discourse is effectively absent from
the Internet text\footnote{see Limitations in \ref{limit}.
} that dominates LLM pretraining, raising generalization questions about what is attributable to pretraining or LLM idiosyncrasies: \emph{what shared behaviors do LLMs exhibit when asked to perform an exceptionally uncommon task with respect to their training data?}


\subsection{From proxy evaluation to intended impact}
The hierarchies of inference in generative AI evaluations can be represented as a series of cascading \textit{proxies}: easier measures acting in place of the true desired outcome (Fig. \ref{fig:cascading}). In children's education settings, common proxy criteria, such as (non-student) user preference \citep{jurenka_towards_2024,kornell_edtech_2024}, are potentially informative but incomplete, because proxies can be optimized without improving schooling's intended impact: student growth. 
This concern mirrors the literature in other disciplines such as coding and scientific-discovery, where strong performance on question-answering benchmarks (e.g., GPQA, \citealt{rein_gpqa_2023}; MMLU-Pro, \citealt{wang_mmlu-pro_2024}) can coexist with weaknesses on tasks that are required to deploy real code or conduct real science \citep{becker_measuring_2025,song_evaluating_2025,alampara_probing_2025,mirza_framework_2025,kim_correlated_2025,kalai_why_2025}. We posit that there are many other domains with congruent gaps. In education, we observe that models may reproduce the \emph{language} of effective pedagogy without tracking the features of instruction that causally support learning.

As it is for such application areas, standard AI benchmarks, which are typically focused on question-answering or tasks with discrete solutions, are ill-suited for the nuanced, generative, and high-stakes nature of educational applications \citep{gehrmann_repairing_2022, hu_prompt-based_2023,zhou_navigating_2023, zhou_relying_2024,kim_im_2024,wu_style_2023,reuel_betterbench_2024, hardy_autoscoring_2026}. In this setting, one could build AI systems that \emph{act} pedagogically sound while failing to identify teaching practices that actually improve achievement, risking deploying technologies that are not only ineffective but potentially harmful to student learning \citep{bastani_generative_2024,shein_impact_2024,rismanchian_artificial_2026, lee_impact_2024}.

To avoid these pitfalls, we use two external criteria that, to our knowledge, have only been linked by one other LLM study \citep{hardy_measuring_2025}:
(i) \textbf{\textit{Downstream Task}, which is expert human observation annotations} using real-world instructional instruments, and
(ii) \textbf{\textit{Intended Impact} through value-added measures (VAMs) of long-term student achievement gains} for those same classrooms. The latter is considered the ``gold standard'' for measuring impact on student learning. Methodologically, we treat both as alignment targets: alignment with the \emph{downstream task} (expert ratings of teaching practice) and alignment to the \emph{intended impact} (predicting which classrooms produce greater learning gains). A primary contribution at the intersection of LLMs and classrooms, it evaluates LLMs using outcome-based criteria rather than human preference alone (see sections \ref{sec:vams} and \ref{sec:rubrics}).
\paragraph{High-level Trends} Across analyses in the present study, a consistent picture emerges that parallels recent results in other disciplines: models can converge on confident, mutually reinforcing judgments that are poorly tethered to the underlying target. In classrooms, ``knowledge'' of pedagogical concepts does not reliably translate into the ``wisdom'' to discern what is relevant to human student learning. 

\subsection{Contributions to High-noise Contexts}
The study provides several important contributions to the current study of applied LLMs. First, it directly quantifies a novel gap between LLM execution of downstream tasks and the \emph{intended impact} (see section \ref{sec:discalign}).
Then, using ensembling, we test whether LLM idiosyncratic competence (by weighting models by ``pedagogy expertise'' on benchmarks) or shared pretraining (by using consensus/unanimity) mitigates misalignment.
Finally, we decompose the variance in misalignment attributable to the two levers practitioners most often control, model choice and prompt choice, with a  method that can be generalized to other alignment studies.

A key contribution is the general methodological framework we use for measuring LLM alignment in high-noise contexts where strong experimental controls may not be feasible:

\begin{enumerate}[nosep]
    \item \textbf{Correlate behaviors across space of generalization}: Demonstrate correlation of behaviors across the types of models, prompting and post-training techniques, environments, and/or harnesses across which we hope to generalize  desired downstream tasks. \S~\ref{sec:dcor}
    \item \textbf{Measure best-proxy alignment}: Measure (mis)alignment on downstream tasks. Kendall's $\tau$ is a simple, strong option that can be estimated with any datatype. \S~\ref{sec:alignment}
    \item \textbf{Determine impact alignment}: Data representing intended impact may have time-delays after the original downstream tasks or may have grain-size mismatches. For high-noise contexts, establish real-world baselines using best available predictors to be able to discern meaningful and practical reference points for comparison. \S~\ref{sec:alignment} 
    \item \textbf{Decompose the error:} decompose misalignment error variance across the space of generalization \citep{meehl_appraising_1990,brennan_generalizability_2001} to quantify contributions to the error. \S~\ref{sec:vardecomp}
\end{enumerate}

\section{Experimental Design}
Using transcripts from primary school mathematics classrooms, we prompt a suite of 16 leading LLMs to assign ordinal ratings based on a rubric (Fig. \ref{fig:expdes}) where each transcript is evaluated across multiple observation dimensions.  We then measure the directional alignment between (a) LLM scores with expert human ratings on the same dimensions and (b) LLM scores with VAMs. Because primary school discourse is effectively absent from pretraining, we can measure how well models generalize to a mismatched distribution.

\begin{figure}
    \centering
    \includegraphics[width=1\linewidth]{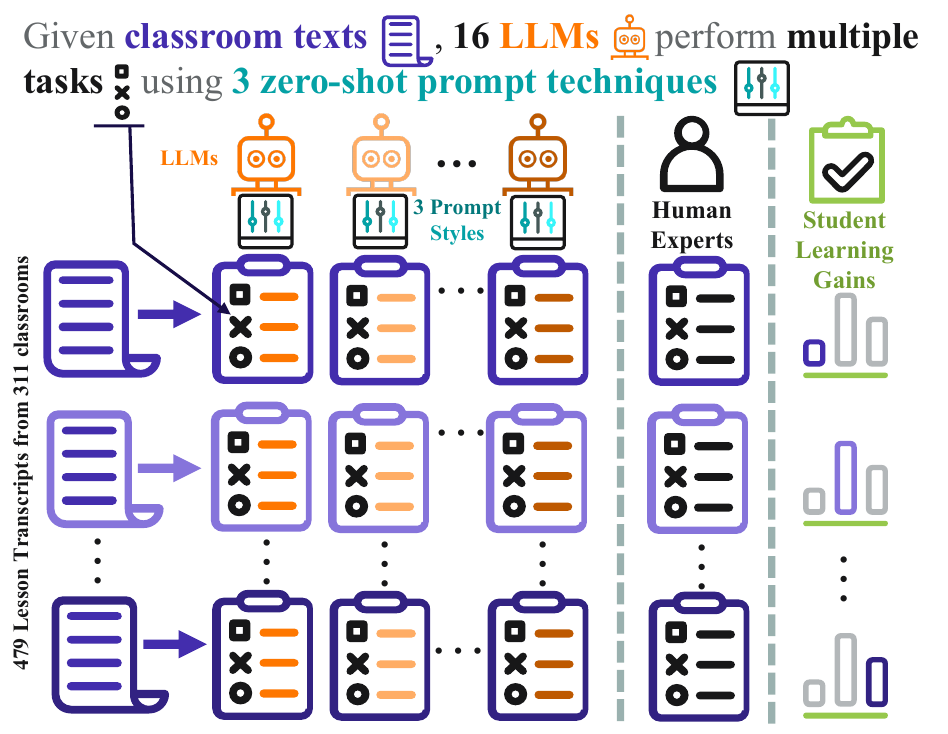}
    \caption{\textbf{Task Data and Experimental Design}. Each LLM is provided classroom transcripts. Using several prompting techniques for each model, LLMs place an ordinal rating on the quality on an aspect of teaching and learning. This is done across seven distinct tasks. We evaluate for alignment of LLM values, and not for accuracy, when comparing the relative ranking provided by each LLM on each task with human experts and with student learning gains for the class.}
    \label{fig:expdes}
\end{figure}

For each classroom lesson in our test set, we provide multiple LLMs with a transcript segment. The models are prompted to perform seven distinct tasks, each focused on a different dimension of teaching and learning (details in Section \ref{sec:data}). We first use the bias-corrected squared distance correlation $\operatorname{dCor}^2_n$ to measure any deviation from independence between the ratings. Our core alignment analysis compares pairwise directionality: 
we assess whether expert human ratings or student learning data also lesson pairs in the same order as LLMs. This approach, which evaluates the alignment of second-moment rank-based information, provides a direct and robust measure of alignment that is insensitive to variations in rating distributions and allows for authentic baseline comparisons. We then use multiple ensemble methods to amplify/attenuate shared misalignment signals. To understand the sources of misalignment, we structurally decompose observed misalignment errors by the variables in our space of generalization: LLM, prompting strategy, item/task, and class transcript.


\section{Methods}
Even among other high-noise contexts, evaluation of educational applications is particularly challenging \cite{kraft_interpreting_2020, jurenka_towards_2024} and, at best, is based on noisy instruments \cite{mccaffrey_intertemporal_2009, kane_gathering_2012, kane_have_2013,hardy_all_2024} measuring latent constructs \cite{messick_validity_1995, hill_validating_2012} and questionable data quality \cite{ho_reliability_2013, xu_promises_2024}. Our methodology is designed to measure the alignment between the relative ordering of teaching quality as judged by LLMs, expert humans, and student learning outcomes. We eschew direct comparisons of absolute rating scores, which are susceptible to noise and idiosyncratic scale use by both humans and models. Instead, we treat LLM ordinal outputs as Thurstonian indicators of their pedagogical values by focusing on pairwise concordance, a robust measure of directional agreement, inspired by research on AI safety and utility \cite{mazeika_utility_2025, huang_values_2025}.

\subsection{Measuring dependence with $\text{dCor}^2_n$}\label{sec:dcor}
To measure any amount of dependence between observations, including nonlinear and nonmonotonic, we use the Bias Corrected Squared Distance Correlation \citep{szekely_measuring_2007,szekely_partial_2014} to determine the strength of the relationships between tasks and raters. We report the average squared correlations disaggregated by relationship type in Figure \ref{fig:dCor_summary}. Inter-LLM specific patterns of shared behavior are more visually striking in Figs. \ref{fig:distance} and \ref{fig:distance_class}. Additional details are in Appendix \ref{app:dcor}.

\subsection{Measuring alignment with Kendall's \texorpdfstring{$\tau$}{Tau}}\label{sec:alignment}
To formalize this concept of alignment, we employ Kendall's $\tau$, a nonparametric and robust  measure of concordance \cite{kendall_treatment_1945,bishara_confidence_2017}. We reconstruct its formulation here to motivate it as an intuitive and precise measure of alignment between two sets of scores, such as those from an LLM, $x$, and an outcome metric, $y$ (e.g., expert scores or student learning gains). Consider a set of $n$ lessons. For any pair of distinct lessons, indexed by $i$ and $j$, we can evaluate whether the ratings from source $x$ and source $y$ agree on their relative order: 

\textit{  LLM $X$ rates lesson $i$ as better than lesson $j$: $x_i>x_j$.  Does this align with human experts $Y$, $y_i > y_j$? Does this align with student learning $Z$ associated with each lesson, $z_i>z_j$?
}

Adding brackets as indicator functions we get: $x_{ij} = [x_j>x_i] - [x_j<x_i] $ and $ y_{ij} = [y_j>y_i] - [y_j<y_i]$
where the alignment between $x$ and $y$ for two lessons is simply the product $x_{ij}y_{ij}$. All pairwise comparisons between lesson ratings from LLM $X$ and some outcome $Y$ are collected into the antisymmetric matrices $X = (x_{ij})$ and $Y=(y_{ij})$, respectively.  Measuring alignment between $X$ and $Y$ is the aggregation of all pairwise directional alignments (is lesson $i$ better than $j$) across all lessons:\footnote{Confidence intervals are computed using the correction from \cite{fieller_tests_1957}} $\tau_{XY} = \langle X, Y \rangle_{\rm F}/\|X\|_{\rm F} \|Y\|_{\rm F}$
where $\langle \cdot, \cdot \rangle_{\rm F}$ and $\|\cdot\|_{\rm F}$ are the Frobenius inner product and Frobenius norm, respectively.  This formulation mathematically mirrors the needs of the alignment question, and does so by reducing the sensitivity to noise in the ratings. This scale-independent approach allows us to establish meaningful real-world baselines, such as using a teacher's years of experience or a prior VAM as the ratings (Fig. \ref{fig:alignment}). 

\subsection{Decomposing error variance}
\label{sec:vardecomp}

Aggregate misalignment metrics (correlations, mean error) do not reveal \emph{why} LLMs fail: the same overall error can be produced by (i) fixable implementation choices (model selection, prompting), (ii) shared, systemic biases that persist across models, or (iii) transcript- and construct-specific difficulties. To localize failure modes, we structurally decompose the observed misalignment error into fully crossed variance components associated with each facet of our evaluation design.

Let $c\in\mathcal{C}$ index classroom transcript segments (observations), $i\in\mathcal{I}$ rubric items, $m\in\mathcal{M}$ foundation models, and $p\in\mathcal{P}$ prompt families. For each cell $(c,i,m,p)$ we observe a standardized LLM score $\widetilde{X}_{cimp}$ and an aligned (pre-standardized) value-added outcome $\widetilde{Y}_{c}$. The (unsigned) misalignment error is the squared difference
\begin{equation}\label{eq:sqerr_main}
  \hat{e}_{cimp} \;=\; (\widetilde{X}_{cimp} - \widetilde{Y}_c)^2 .
\end{equation}
We then fit a fully crossed random-effects model (Generalizability Theory; \citealp{brennan_generalizability_2001}) that partitions $\hat{e}_{cimp}$ into main effects and interactions among $\{\textsc{Obs}=c,\textsc{Item}=i,\textsc{LLM}=m,\textsc{Prompt}=p\}$:
\begin{align}\label{eq:vardecomp_main}
  \hat{e}_{cimp} \;=\;\; & \mu
  + \!\!\!\!\sum_{\emptyset\neq \alpha \subseteq \{c,i,m,p\},\,|\alpha|\le 3}\!\!\!\!\!\! \nu_{\alpha}
  + \eta_{cimp},
\end{align}
where each $\nu_{\alpha}\sim\mathcal{N}(0,\sigma^2_{\alpha})$ is a mean-zero random effect for facet set $\alpha$, and $\eta_{cimp}\sim\mathcal{N}(0,\sigma^2_{\eta})$ is the cell-specific remainder comprised of the four-way interaction confounded with residual noise. 

For each component $k$ we report its \emph{variance share} $\pi_k=\sigma^2_k/\sigma^2_{\mathrm{tot}}$ (Eq.~\ref{eq:varshare}), which quantifies how much of the misalignment landscape is attributable to (a) developer-controlled levers (\textsc{LLM}, \textsc{Prompt}, and their interactions) versus (b) evidence- and construct-conditioned effects (\textsc{Obs}, \textsc{Item}, and interactions). Appendix~\ref{apx:vardecomp} provides the full model specification, estimation details, code, sign-preserving variants, and a decision-study analysis for how many models/prompts are needed to stably recover the shared misalignment signal.

\section{Data}
\label{sec:data}
All data used in this study originate from publicly available sources, ensuring the reproducibility of our findings. The core dataset is from the National Center for Teacher Effectiveness (NCTE) Main Study \cite{kane_national_2015}, a landmark project that collected extensive data on teaching and learning over three years. The NCTE dataset comprises observations of roughly 350 4th and 5th-grade mathematics teachers across four U.S. school districts. It is one of two educational datasets that contains measures of teaching practice, authentic classroom artifacts, and student learning outcomes at scale (VAMs). 

\subsection{Classrooms and downstream task observation instruments} \label{sec:rubrics}
Our primary input for the LLMs are anonymized transcripts \cite{demszky_ncte_2022} of video-recorded classroom lessons using the test set defined by \cite{wang_is_2023}. Human raters \citep{kane_national_2015} rated lessons by watching the videos. These same lessons were previously evaluated by teams of expert human raters using two validated, multi-dimensional observation instruments currently used in the field: \textbf{Mathematical Quality of Instruction (MQI):} A content-specific framework for evaluating the richness and precision of mathematics instruction \cite{hill_mathematical_2008} and \textbf{Classroom Assessment Scoring System (CLASS):} A framework for assessing general dimensions of classroom quality, including behavior management and class climate \cite{pianta_classroom_2008}. The 63 MQI raters were recruited for their mathematics instruction expertise and underwent rigorous certification and continual calibration to ensure scoring reliability, a practice also used with the 19 CLASS raters \cite{blazar_attending_2017,kane_national_2015}. The expert scores from these instruments serve as our first target for LLM alignment.\footnote{Human rater data:\url{https://www.icpsr.umich.edu/web/ICPSR/studies/36095/datadocumentation}; Transcripts are available for 1,600 of the lessons \cite{demszky_ncte_2022} \url{https://github.com/ddemszky/classroom-transcript-analysis}; replication test set and prompts \url{https://github.com/rosewang2008/zero-shot-teacher-feedback}. Full replication LLM output for this study can be found \url{https://drive.google.com/file/d/1fP7xyKasJ4Ui6di-SlY3TLdNQCeg59Tr/view?usp=sharing}}

\subsection{VAMs: Value-added to student learning}\label{sec:vams}
To connect model outputs to the intended impact of teaching, we use value-added measures (VAMs) of student learning. VAMs are widely considered the gold standard for statistically estimating a teacher's causal effect on student achievement gains \cite{bacher-hicks_evaluation_2017,bacher-hicks_experimental_2019,kane_gathering_2012}. A VAM quantifies how much a teacher's students grew academically over a school year compared to their expected growth, controlling for prior achievement, context, peer effects, and other student-level covariates. The NCTE dataset provides multiple high-quality VAM scores for each teacher. Following established practice \cite{kane_gathering_2012}, we use stacked VAMs (Apdx. \ref{apx:disatt}, \citealp{kane_national_2015}) for each teacher-year corresponding to the observed lesson estimate of a teacher's contribution to student learning. Crucially, our evaluations assume that improved teaching practices are positively associated with improved gains to student learning, in aggregate and on average, even if all sources of noise cannot be removed. To our knowledge, this is the first study to use VAMs as a benchmark for evaluating generative LLMs.

\section{Results and Discussion}

\subsection{The convergent bias of foundation models}\label{sec:discdcor}
A noteworthy result is the striking \emph{behavioral homogeneity} of LLMs when evaluating classroom transcripts. As summarized in Figure~\ref{fig:dCor_summary}, different models' ratings are substantially more correlated with one another than with expert human ratings, both within the same task and across different instructional tasks. 

\begin{figure*}[th]
    \centering
    \includegraphics[width=0.95\linewidth]{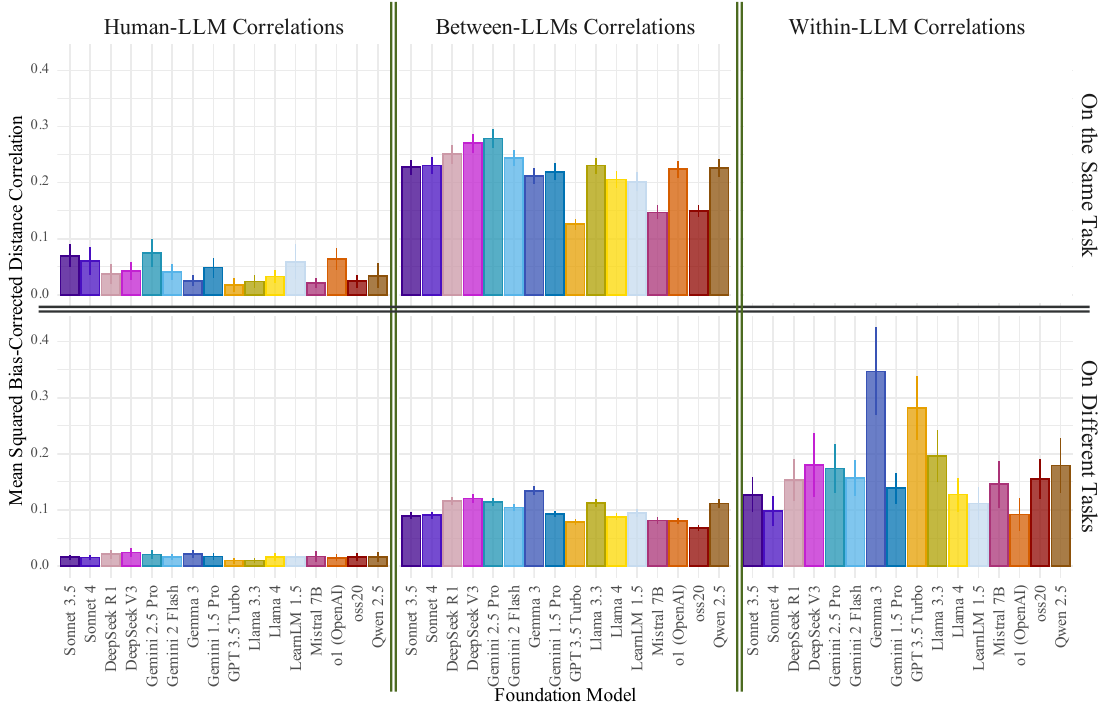}
    \caption{\textbf{Mean Inter-task Bias Corrected Squared Distance Correlations $\operatorname{dCor}^2_n$ }: between LLMs and human raters across different evaluation tasks. \textbf{Top row: Same-task Correlation} Mean inter-rater distance correlations across transcripts for the same task and (\textbf{bottom row: different task correlation}) for different tasks using the same transcript. (\textbf{left: correlations with humans}) Mean inter-rater distance correlations with expert human raters, (\textbf{center: correlations with other LLMs}) with other LLMs, and (\textbf{right: intramodel intertask correlations}) each LLM with itself. The top right is omitted for redundancy.  Lines are standard errors for each estimated mean. Means and SEs were computed under Fisher's $z$ transformation and back transformed to preserve variance. The complete correlation matrix for each task and model are found in Figs. \ref{fig:distance} and \ref{fig:distance_class}}
    \label{fig:dCor_summary}
\end{figure*}

Two patterns are especially notable. First, \textbf{LLM-LLM agreement is consistently higher than LLM-human agreement}. Second, \textbf{within- and between-model inter-task correlations are high}, indicating that a model's outputs for distinct instructional constructs (e.g., language support vs. remediation) tend to move together more than expert ratings do. In other words, when confronted with authentic classroom discourse, models appear to rely on a shared latent heuristic of ``good teaching'' that is not strongly anchored to the constructs that human observers are trained to distinguish.

This convergence is plausibly explained by what these systems share: an autoregressive pretraining objective and large-scale Internet text. Authentic elementary classroom discourse is largely absent from such corpora, forcing generalization under distribution shift. The resulting shared bias echoes emerging evidence that, as models scale, their representations and judgments can become increasingly correlated across developers and architectures \citep{kim_correlated_2025,ren_human-ai_2024,huang_values_2025}. The following section investigates whether these model convergences are aligned.

\subsection{Perils of proxy alignment}\label{sec:discalign}
Figure~\ref{fig:alignment} juxtaposes two forms of alignment for each model and task: pairwise concordance correlations with expert ratings ($\tau_{S_f X}$, x-axis) and with student learning gains ($\tau_{S_f Y}$, y-axis). The central empirical finding is a \textbf{systematic disconnect} between these axes. Models that appear more aligned to expert judgments are \emph{not} correspondingly aligned to learning, and in many cases, are \emph{more negatively} associated with learning outcomes. The magnitudes of the baselines are consistent with the literature \citep{mccaffrey_intertemporal_2009,kane_estimating_2008,kane_gathering_2012} (see Appendix \ref{app:priorvam}).

\begin{figure*}
    \centering
    \includegraphics[width=1\textwidth]{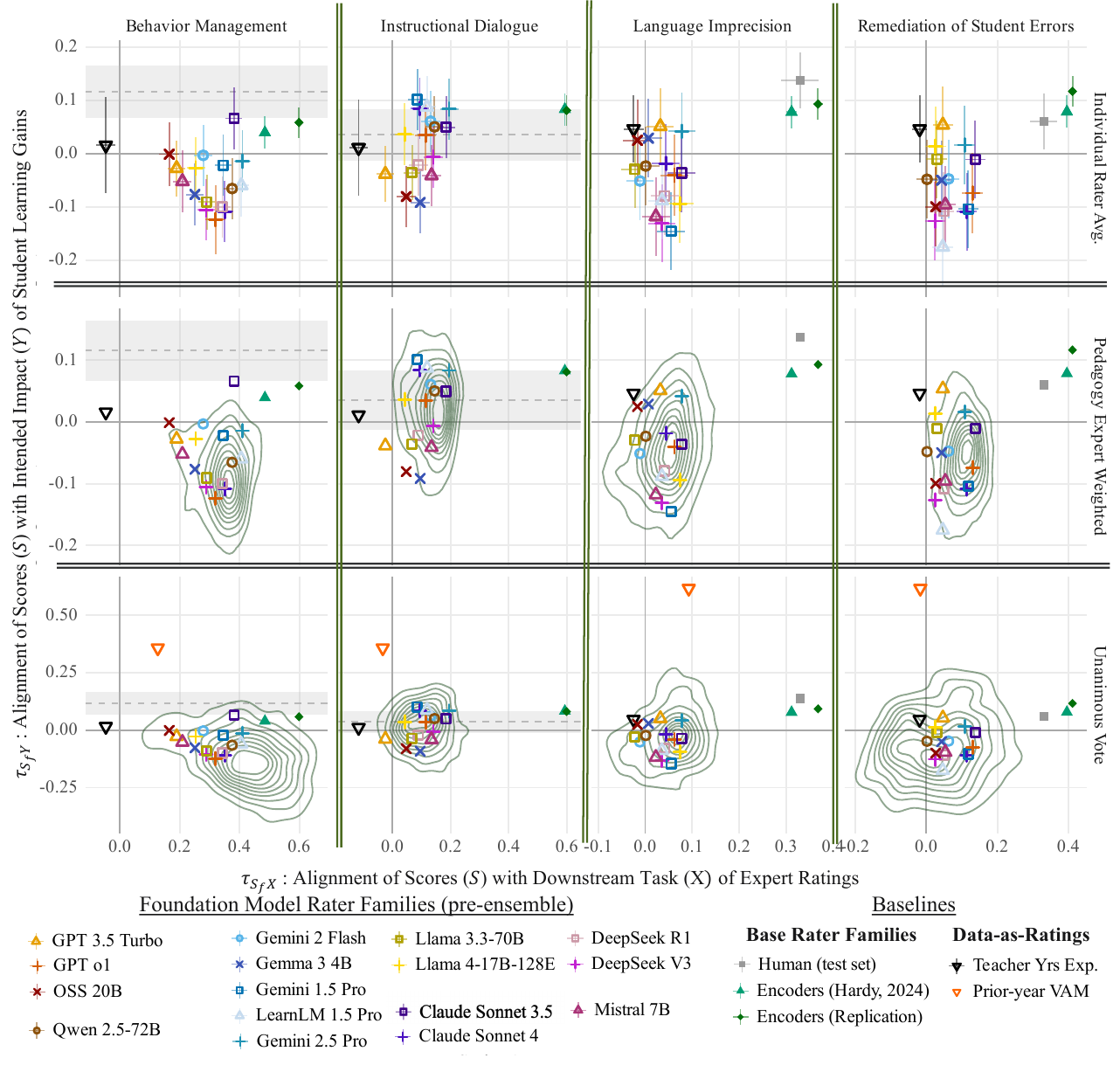}
    \caption{\textbf{(Mis)alignment with Downstream Task (Teaching) and Intended Impact (Learning)}: The \textbf{x-axes} measure the alignment of scores ($S_f$) from each LLM, ensemble, and baseline $f$ with expert human ratings on downstream tasks ($X$) on the quality of teaching skills in a given lesson: $\tau_{S_fX}$. Similarly, the \textbf{y-axes} measure alignment with the value-added to learning via student achievement gains ($Y$): $\tau_{S_fY}$. Each color-shape combination represents a different rater family or baseline. Each column represents a specific instructional rating task, from (\textbf{left} to \textbf{right}): \texttt{\texttt{CLBM}, \texttt{CLINSTD}, \texttt{LANGIMP}}, and \texttt{\texttt{REMED}}. Each row represents a distinct implementation scenario: \textbf{Top row (individual rating models)} The 95\% CI are shown for individual models. \textbf{Middle row (Pedagogy Expertise Weighted Ensembles)} and \textbf{Bottom row (Unanimous Vote Ensembles)}: The alignment correlations of all possible model-prompt ensembles are represented by the density contours. The innermost contours represent the alignment regions' greatest ensemble performance density. For the bottom row ensembles, we amplify the shared signal by only measuring observations where all three models are in agreement (See Appendix \ref{sec:ensemble-failure}).  \textbf{Baselines}: the gray line and shaded regions on \texttt{CLBM} and \texttt{CLINSTD} represent the estimate and confidence intervals from the human expert rating alignment with student learning gains on the study's joint test set. The green Encoder models (data from \cite{hardy_all_2024} and our own replications) are LMs but not LLMs, and they merely serve as a deep learning baseline for possible alignment based on transcripts. The \textbf{baseline of Teacher Experience} represent pairwise comparisons that always rank more experienced teachers higher than less experiened ones. For better plotting details, the ``oracle'' \textbf{VAM baseline} which puts a higher value to teachers with higher prior year VAMs,  is only displayed on the bottom row. Random baselines (both uniform and stratified) predictably clustered around the origin and, with no real-world example of this as a valid baseline, we exclude them from the plots.}
    \label{fig:alignment}
\end{figure*}

This pattern constitutes a particularly consequential failure mode: \textbf{proxy alignment without impact alignment}. A system can appear to ``do the job''—produce plausible, rubric-concordant scores—while selecting classrooms that are \emph{worse} on the objective schooling ultimately values. This mirrors the caution from scientific-discovery evaluations: benchmark success can obscure fragility in the behaviors that matter when evidence is messy and objectives are implicit rather than explicitly labeled \citep{song_evaluating_2025,zhou_navigating_2023,zhou_relying_2024}.

We also observe the converse: some smaller or older models occasionally show slightly better $\tau_{S_f Y}$ while exhibiting weaker $\tau_{S_f X}$. Qualitative inspection of failures suggests these cases often reflect \emph{task noncompliance}: rather than applying the intended rubric, models may latch onto superficial transcript features that, coincidentally, correlate with higher achievement gains in this sample. The implication is that neither ``sounding pedagogical'' nor matching expert observation scores is sufficient evidence that the model has learned a transferable representation of effective instruction.

We analyze whether additional test-time ``reasoning'' variants yield improvements analogous to those experienced in many complex tasks. Comparing paired models sharing the same base (e.g., DeepSeek-R1 vs. DeepSeek-V3.1; and similarly GPT-3.5-class vs. reasoning-oriented variants), we find \textbf{no measurable improvement} on either axis of alignment in Figure~\ref{fig:alignment}. Findings comparing the chain-of-thought prompt were similar. For classroom evaluation, additional reasoning context window alone does not appear to repair the core mismatch between model judgments and constructs that predict learning gains.

\subsection{Ensembling exacerbates misalignment}\label{sec:discensem}
A natural response to noisy model behavior is ensembling. We evaluate two conceptually opposite approaches: (i) a \emph{pedagogy-expertise-weighted} ensemble, where model votes are weighted by pedagogical benchmark performance, highlighting LLM uniqueness, and (ii) a \emph{unanimous-vote} ensemble, where we score only those instances where all models agree, emphasizing shared signal (\S\ref{sec:ensemble-failure}).

Neither strategy improves alignment with student learning. Instead, both frequently \textbf{worsen} $\tau_{S_f Y}$, especially on core instructional dimensions such as remediation of student errors and behavior management (Figure~\ref{fig:alignment}). This result has two implications. First, it suggests that benchmark-measured pedagogical ``knowledge'' does not translate into reliable recognition of effective pedagogy in authentic discourse (i.e., the benchmark construct is not externally valid for this downstream setting). Second, it indicates that when models agree, they may be amplifying a shared but flawed heuristic; consensus is not evidence of correctness with correlated errors \citep{chen_are_2024,zhu_bare_2025}.

\begin{table}
\centering
\caption{\textbf{Misalignment Shares}: Proportion of Variation in Squared Error by Source. Row partitions separate main, second-order, and higher-order effect posterior distributions. }\label{tab:sq_err_decomp}
\resizebox{\ifdim\width>\linewidth\linewidth\else\width\fi}{!}{
\begin{tabular}{lcccc}
\toprule
Facet of Error Variation & median & MAD & MAP & 95\% HDI \\
\midrule
ITEM & 0.08 & 0.08 & 0.03 & [0.01,0.62] \\
LLM & 0.07 & 0.04 & 0.06 & [0.02,0.18] \\
OBS & 0.02 & 0.01 & 0.02 & [0,0.03] \\
PROMPT & 0.02 & 0.03 & 0.00 & [0,0.58] \\
\midrule

ITEM:OBS & 0.04 & 0.01 & 0.05 & [0.01,0.06] \\
LLM:ITEM & 0.03 & 0.01 & 0.03 & [0.01,0.06] \\
LLM:OBS & 0.00 & 0.00 & 0.00 & [0,0] \\
LLM:PROMPT & 0.02 & 0.01 & 0.01 & [0,0.04] \\
PROMPT:ITEM & 0.01 & 0.01 & 0.00 & [0,0.05] \\
PROMPT:OBS & 0.00 & 0.00 & 0.00 & [0,0] \\
\midrule
LLM:ITEM:OBS & 0.19 & 0.03 & 0.21 & [0.05,0.23] \\
LLM:PROMPT:ITEM & 0.03 & 0.01 & 0.03 & [0.01,0.05] \\
LLM:PROMPT:OBS & 0.14 & 0.02 & 0.15 & [0.04,0.17] \\
PROMPT:ITEM:OBS & 0.02 & 0.00 & 0.02 & [0.01,0.03] \\
LLM:PROMPT:ITEM:OBS +  $\epsilon$ & 0.24 & 0.04 & 0.26 & [0.06,0.29] \\
\bottomrule
\end{tabular}}
\end{table}

\subsection{Persistent artifacts of autoregression}\label{sec:discvardec}

\paragraph{What the decomposition tells us (and why we need it).}
Aggregate metrics (e.g., mean error, correlation) cannot distinguish \emph{fixable} misalignment from \emph{structural} misalignment. Our fully crossed variance decomposition over \textsc{Obs} ($c$), \textsc{Item} ($i$), \textsc{LLM} ($m$), and \textsc{Prompt} ($p$) localizes where misalignment lives: in developer-accessible choices (\textsc{LLM}, \textsc{Prompt}), in the evidence and construct being scored (\textsc{Obs}, \textsc{Item}), or in their interactions. This turns a vague diagnosis (“the model disagrees with VAM”) into an actionable one (“error is dominated by transcript-conditioned interactions, so model shopping and prompt iteration will not reliably fix it”).

\paragraph{Model choice and prompt choice are weak levers.}
Table~\ref{tab:sq_err_decomp} shows that the main effects of \textsc{LLM} and \textsc{Prompt} explain only a small portion of the squared-error landscape (median shares $0.07$ and $0.02$), and even their interaction remains small (median $0.02$). If misalignment were caused by a few deficient models or a single poor prompting recipe, these components would be large and stable. Instead, the decomposition implies a more sobering practical conclusion: swapping models and prompts may change outcomes locally, but it is unlikely to yield a \emph{reliable} improvement in validity against intended impact.

\paragraph{Prompting is brittle, not corrective.}
Although the typical prompt effect is small, its posterior is long-tailed (wide HDI despite a low median), indicating \emph{prompt brittleness}: prompts can occasionally inject large error without delivering consistent gains across transcripts. The risk is asymmetric: prompt changes can produce dramatic “wins” on a handful of cases while silently degrading performance elsewhere, an often missed failure mode. 

\paragraph{Misalignment concentrates in context-conditioned interactions.}
The dominant variance shares occur in higher-order interactions that condition on the classroom text, especially \textsc{LLM}$\times$\textsc{Item}$\times$\textsc{Obs} ($0.19$) and \textsc{LLM}$\times$\textsc{Prompt}$\times$\textsc{Obs} ($0.14$), together with a substantial cell-specific remainder ($0.24$). This is the signature of a transcript-conditional failure mode: models behave unpredictably on particular kinds of instructional evidence. Scientifically, that pattern is consistent with LLMs relying on latent proxies (fluency, affect, participation cues, stylistic norms) that are only loosely coupled to student learning gains and whose influence varies with the segment.
\paragraph{The direction of error is more shared than the magnitude.}
Squared error $\hat{e}$ hides whether models over- or under-estimate impact. When we preserve direction using the signed quadratic error $\hat{e}^{\pm}$ (Appendix~\ref{apx:signpreserve}), the \textsc{Item}$\times$\textsc{Obs} share increases substantially (Table~\ref{tab:app_sgn_sq_decomp_full}). This indicates that models often drift in the \emph{same direction} on the same item--segment pair, even when they disagree stochastically about how large the error will be. In other words, LLMs may look noisy at the surface, yet still share a coherent (and undesirable) inductive bias about what “good teaching” looks like in text.

\paragraph{Implication: validity fails in regimes where users most need reliability.}
Taken together, these findings explain why surface-level interventions underperform. The misalignment is largely not a property of a single model or prompt; it is an evidence-conditioned artifact that persists across models and is revealed only when we ask about intended impact. For high-stakes applications, this motivates a different optimization target: reducing the \emph{shared} misalignment component (the part that survives averaging over models/prompts), rather than chasing occasional prompt-dependent gains on fixed transcript sets.

\section{Conclusion}\label{sec:conc}

This paper studies a common but under-instrumented problem in contemporary NLP: LLMs can exhibit strong internal agreement and high apparent competence while failing to align with the outcomes a domain actually values. Here, we evaluate LLM-based scoring of classroom instruction against two external criteria: expert human observations (the downstream task) and value-added measures (VAM) of student learning gains (the intended impact). Across leading models and prompting strategies, we find that (i) models converge behaviorally, (ii) their convergence is misaligned with intended impact, as alignment with the downstream task is not a strong enough proxy,  and (iii) common ensembling and prompt engineering do not reliably repair the gap .

A key methodological contribution is a structural decomposition that makes these failures measurable in high-noise regimes. Rather than treating misalignment as a scalar, we model the observed error as arising from multiple, simultaneously operating facets—\textsc{Item}, \textsc{Obs}, \textsc{LLM}, and \textsc{Prompt}—and their interactions in a fully crossed design. This variance decomposition changes what an evaluation can claim. It supports a practitioner-relevant statement of the form: \emph{how much of the observed error is plausibly removable by changing implementation choices (model/prompt), and how much persists as transcript-conditional, shared behavior that will remain even if one “shops” for a better model?} In our data, the dominant error structure concentrates in higher-order interactions involving the classroom evidence, with comparatively small and brittle contributions from prompt and model main effects. The implication is concrete: when misalignment is concentrated in evidence-conditioned interactions rather than in stable model or prompt effects, iterative prompt engineering and model substitution are unlikely to yield reliable validity with respect to intended outcomes.

More broadly, the decomposition offers an evaluation template for other AI and NLP problems where (a) labels are noisy, (b) the unit of prediction mismatches the unit of impact, or (c) the true target is only observable through delayed, aggregated, or confounded measurements. Examples include human-centered summarization (where user decisions matter more than ROUGE), clinical decision support (where outcomes may occur weeks or months later), content moderation (where harms are downstream and partially unobserved), and policy or educational tools (where success is causal and context-dependent). In such domains, chasing marginal benchmark gains can be rational while still missing the central scientific question: \emph{does the system improve the world it is deployed in?}

We therefore argue for a shift in evaluation practice. First, NLP evaluation should more often include an explicit intended-impact target—however noisy—and should report uncertainty and sensitivity rather than only point estimates. Second, when noise is unavoidable, the correct response is not to abandon measurement, but to use designs and estimands that are robust to it: multifacet decompositions, decision studies, hierarchical rater models, residualization and mediation analyses, and pre-registered stress tests. Third, the community should treat “high noise” not as a disqualifier for rigorous work, but as a signal that the domain is real: many high-stakes applications are difficult precisely because outcomes are multi-causal, delayed, and imperfectly measured. We need to listen more closely to measure the noise.

Finally, our results suggest a cautionary principle for applied LLM evaluation: \emph{knowledge without wisdom is detectable}. When models exhibit strong consensus yet their shared error correlates poorly or negatively with intended impact, scaling and superficial alignment interventions are unlikely to suffice. Progress will require methods and training signals that explicitly bind model judgments to causal, downstream consequences, and evaluation frameworks that can reveal when apparent competence fails to translate into real-world benefit. In short, LLM evaluation should demonstrate the wisdom that our very smart LLMs may lack.

\section*{Limitations}\label{limit}
Our central aim is methodological: to measure alignment between LLM-generated evaluative scores and both expert ratings (a downstream task target) and student learning gains (an intended-impact target) under realistic noise. This design necessarily inherits the constraints of education measurement. Value-added measures (VAM) are imperfect, high-variance estimates of causal impact; transcript segments are partial, lossy views of instruction; and expert rubrics, while informative, exhibit nontrivial rater disagreement. These limitations do not invalidate the study---they define the regime we seek to operate in---but they bound the claims we can make. Accordingly, we interpret variance decomposition results as statements about \emph{where observed misalignment concentrates} under this measurement system, not as a definitive census of all sources of pedagogical effectiveness. We also emphasize that our estimates are conditional on the sampled items, segments, models, and prompt families; extending the universe of items, observation windows, or prompting mechanisms could change component magnitudes, even if the core inference about controllable versus systemic error structure persists.

In the presence of low reliabilities observed by human annotators, we echo that there is still substantial and meaningful information, even with the measurement error found throughout education contexts \citep{ho_reliability_2013,mccaffrey_uncovering_2015, hardy_all_2024,hardy_measuring_2025}.  
The test set used \citep{wang_is_2023} may have unobserved confounding factors in its construction. To account for this, we also investigated other methods of estimating these effects given the complexity of the relationships. In the extended Appendix \ref{apx:altmethods}, we demonstrate a multi-stage residualization of confounding and mediating effects. We are pleased to report that our conclusions of this paper are robust to and even are strengthened by these additional tests. For additional information, context, and tests for working with high noise data, see also Appendices \ref{apx:vam} and \ref{sec:highnoise} and \cite{brennan_advanced_2001, mccaffrey_intertemporal_2009, brennan_generalizability_2013,hardy_all_2024,casabianca_psychometrics_2025}.

The transcript data used in this work contain only fourth- and fifth-grade mathematics classrooms from the United States. Furthermore, the associated ratings pertain solely to a subset of rating items on a specific rubric, which may introduce limitations when addressing other tasks of classroom instructional support for children. While there is no evidence to suggest that findings would be different in other primary classrooms, the data make generalization to all classrooms not demonstrable in the current study.

Meaningful representation of student classroom learning is absent on the internet largely for laws protecting children’s privacy. The text for our data was only anonymized and made public in 2022 (\citeauthor{demszky_ncte_2022}), and it is not ``crawlable''; nor is it easy to link these data with the (not crawlable) annotations and value-added measures of student learning (VAMs) \citep{kane_national_2015}. The linking of these two sources makes it 1 of 2 extant datasets having classroom interactions with both expert ratings on real teaching instruments and VAMs. The second dataset (``MET Project'') is not publicly available. Part of our interest in doing this research is that conducting these studies is extremely expensive, and economic drivers exclude this line of work from studies that would measure impact (see the exclusion of education, social work, therapy, and other fields from studies like, for instance, \citealt{patwardhan_gdpval_2025}). 

One purpose of this study is to measure the extent to which LLMs have the capacity to conduct tasks in downstream applications \textbf{\emph{responding to classroom teaching and learning}}, where these challenges not only are particularly prominent but also bear significant real-world consequences. Authentic educational content, particularly for school-age children, is effectively non-existent on the Internet and therefore absent in the pretraining data of large LLMs.\footnote{Part of this underrepresentation is a result of the protection of data and educational records of minors} Studies about the quality of teaching and learning are expensive \citep{grissom_effective_2013,liu_measuring_2021,jurenka_towards_2024}, with only two major studies having measures of both teaching and learning: the MET study \citep{kane_have_2013,kane_gathering_2012} and the NCTE Main Study \citep{kane_national_2015}, the latter of which is the source of data for this study. Even then, no transcripts or artifacts of classroom discourse are meaningfully annotated with respect to reliable measure of a) the \textbf{\textit{quality of teaching}}, using authentic instruments designed for humans or b) the \textbf{\textit{ quality of learning}}. In fact, the majority of instructional materials that would be available on the Internet for pre-training are of low quality \citep{polikoff_supplemental_2019, northern_dear_2019, edreports_state_2023,tntp_opportunity_nodate, tntp_opportunity_2024}. 

Many critical education tasks are poorly represented in training data for GPT models. Authentic natural language found in K12 public education contexts is rare on the internet, especially data with meaningful labels or interpretations, because these data involve children and typically have more stringent protections (e.g. FERPA and COPPA in the United States). Thus, a limitation noted by this paper is the need to create more datasets, a limitation shared with many fields, including LLM-as-scientist studies \citep{song_evaluating_2025,alampara_probing_2025,mirza_framework_2025}. 


\section*{Ethical considerations}\label{sec:eth}
This paper is about measurement: how to detect, quantify, and localize misalignment when LLM scores are used as stand-ins for human judgment and, more importantly, for downstream outcomes. But our case study is not measurement in the abstract. Classroom evaluation is a high-stakes domain in which errors are not evenly distributed: mismeasurement can change students' learning opportunities, shape teachers' careers, and amplify existing inequities. The purpose of this section is therefore twofold. First, it explains why the empirical patterns we uncover---strong inter-model consensus, weak connection to intended impact, and limited efficacy of prompt/model tweaks---are ethically consequential in K--12 settings. Second, it provides a roadmap for what “responsible next steps” look like when the technical work reveals that validity cannot be assumed.

Our findings shift the ethical question from ``Is an LLM accurate enough?'' to ``Accurate \emph{for what}, and by what evidence?'' In education technology, it is tempting to treat alignment with expert rubrics as sufficient validation. Our results warn against that shortcut: systems can look reasonable to adults (and even agree with each other) while remaining misaligned with student learning gains. This mismatch creates a distinctive risk profile: deployment can produce confident, scalable judgments with unclear or negative educational benefit. The ethical challenge is thus not merely model bias in the usual demographic sense, but \emph{validity risk}: using an apparently coherent evaluator whose shared inductive biases reward proxies for quality rather than the outcomes schools are responsible for delivering.

For readers asking what to do with these education-specific implications, the remainder of this section is organized around “next-step” operational problems and workable frameworks. We first discuss how to understand educational needs and the limits of what transcript-based scoring can justify, especially for children. We then address equity and accountability: why equal access to an AI tool is not evidence of equal benefit, and what monitoring is required before scaling. Finally, we outline practical safeguards for researchers and developers---including intended-impact evaluation, pre-deployment decision studies, and harm-aware piloting---that translate the measurement methods in this paper into responsible practice in public education contexts.


\subsection{Understanding educational needs of children}
Foundation models’ generality and adaptability as next-token predictors \citep{malach_auto-regressive_2024} have energized the education technology (edtech) sector. We, like many developers, are eager for a world where all children get access to high quality education. With increased hype, why is there yet very little evidence of these models meaningfully improving student learning in K12 contexts? Recent work has even shown that generative AI deployed as conversational agents can harm student learning \citep{bastani_generative_2024,nie_gpt_2024,shein_impact_2024}. However, developers of education technology (edtech) have not been deterred \cite{kornell_edtech_2024,odonnell_heres_2024}, further increasing the high adoption rates of generative AI in educators \cite{bonney_tracking_2024,humlum_adoption_2024,bick_rapid_2024}. So we feel the need to elaborate on the pertinent ethical considerations for edtech in public education spaces.

Answering whether a LLM is good enough or better than the alternative is difficult \citep{damour_underspecification_2020,hutchinson_evaluation_2022}. Evaluating model pedagogical outputs in K12 education is even harder than for most other disciplines \citep{denny_generative_2024,macina_opportunities_2023,wollny_are_2021}, where understanding the impact is paramount. 
One challenge in evaluating for K12 impact is the paucity of datasets that allow for quantitative evaluation of model performance with LLMs underperforming \citep{jurenka_towards_2024,tack_bea_2023,ormerod_automated_2024}.  Unfortunately, poor model performance on existing evaluable K12 datasets is often ignored and can be hidden by changing the nature of the evaluation \citep{rottger_political_2024,schaeffer_are_2023}. For example, instead of asking the model to accurately autoscore student work—a critical task for K12 impact—one could ask it to write plausible feedback which might then be evaluated indirectly using surveys of relative preference by a few raters.

Failure of any educational resource is generally not a binary characteristic, but rather measures of extent along \textit{continua of quality}. For a given group of children, an excellent human tutor would facilitate each child learning at their fullest capacity, however defined, whereas a weaker tutor may only be able to help some students learn some of the content some of the time, much like some non-humans \citep{collins_building_2024,pardos_learning_2023,vaccaro_when_2024}. If the human tutor were replaced by a textbook, there would still a very small subset of children who could fully benefit from that intervention, but most students would not be as well served. These continua of quality exist for all educational resources, e.g., lesson plans, remediation activities, curricula, formative assessments, systems for classroom climate, IEPs, feedback to students, etc.
\paragraph{The Paradox of Free Advice} Similarly, all AI-generated content will be imperfect, and the degree of quality available can be difficult to recognize. Freely available generative AI introduces a challenge that we will call the \textbf{Paradox of Free Advice}: 

\textit{Those needing more guidance are also those that are less discerning of the quality of guidance or support offered}. 

Such individuals may turn to generative AI tools, which can be speciously compelling and confident, even when such models are not deserving of our trust \citep{kim_im_2024,wu_style_2023,yan_refutebench_2024,zhou_relying_2024}. The implications are that, for example, while an automated service may save a teacher time, it may result in a child losing learning by spending time or resources in less efficacious interventions \citep{acemoglu_tasks_2022,holmes_artificial_2022}. In fact, some practices in Reinforcement Learning from Human Feedback (RLHF) \citep{christiano_deep_2023} exacerbate this even further: RLHF-based model tuning simultaneously leads to less accurate outputs and yet increased human confidence that the less accurate output is trustworthy \citep{wen_language_2024}. If even the most expert humans have shown worryingly bad tendencies in collaborative decision making with AI \citep{agarwal_combining_2023}, how will children and public school teachers fare?
The \textit{Paradox of Free Advice} can affect researchers and scientists as well. Failing to have access to or invest in high quality evaluations utilizing true expertise \citep{hosking_human_2024} for the expected human judgments can result in evaluation and reporting of research findings that deepen gaps in quality when convenience samples of human experts rely on these same systems to respond to researcher requests \citep{veselovsky_artificial_2023}.

Biased performance is even harder for individual users to see on single use cases, since GPT models can often appear reasonable and trustworthy even when they are less accurate \citep{klingbeil_trust_2024,wen_language_2024,zhou_navigating_2023}. Children, like adults, become more trusting of these tools with exposure \citep{kosoy_childrens_2024}, an observation that demands deeper exploration into biases, as the models have already been associated with behavioral changes in postsecondary learning contexts \citep{abbas_is_2024,nie_gpt_2024,zhai_effects_2024}. 

\paragraph{Inequity along Continua of Quality} Educational tools targeted for use in compulsory public schooling contexts have the intrinsic responsibility to provide equitable learning for all children (and to clearly communicate otherwise if that cannot be demonstrated). For this paper, \textbf{equality} is defined as equal access to resources, opportunities, tools, or systems, and, in contrast, \textbf{equity} is defined as equal access to the \textit{benefits} of those resources, opportunities, tools, and systems. In other words, merely offering all students access to a particular educational resource is not sufficient evidence of equitable outcomes for students, and, in many cases, may exacerbate existing gaps \citep{benjamin_race_2019,crooks_access_2024,eubanks_automating_2019,kasneci_chatgpt_2023,madaio_beyond_2021}. 
In general, over the last couple decades, edtech has not provided evidence that it can equitably improve student outcomes in K12 public education and, in many cases, has even produced results which have widened preexisting inequities in learning \citep{hansen_democratizing_2015,reich_failure_2020,reich_good_2017}. This increased inequity often arises as an example the “Matthew Effect,” a term for the disparity-widening effects of accumulated advantage that make it easier to benefit further from new advantage. This phenomenon appears in many education contexts \citep{bahr_double_2007,kempe_are_2011,reich_failure_2020,stanovich_matthew_1986}. As a pre-LLM edtech example, asynchronous educational content learning experiences, such as those found in massive open online courses (MOOCs), used to support high schoolers recovering credits towards graduation have widened gaps for those most in need of support \citep{heinrich_look_2019}: those who needed more help who, in fact, got less help from the intervention. As with the example of tutors of varying quality, poorly executed K12 edtech solutions can exacerbate such issues, and thus an even greater need to be able to identify tasks where evaluating output quality is more challenging or uncertain. 
\paragraph{Principle of Precision in education} Equitable and individualized K12 learning requires precision. Recognizing that most uses of generated K12 content, often branded as saving teachers time, are, in fact, \textit{making instructional decisions}—choices about what and how to teach—not just generating ideas for top performers. 
Imprecision in instructional decisions degrades its quality, resulting in inefficient teaching and unforced losses to a child’s learning time. A critical component for helping learners “catch up” is to make instructional decisions that maximize time spent on things students need with precision, rather than on content that they already know. AI use can help with writing when precision is not critical, but has led to worsened decision-making outcomes \citep{vaccaro_when_2024} and can further marginalize learners whose needs are not well represented by the mean of the distribution \citep{treviranus_learning_2022}.

Precise instructional decisions are needed for efficiently and effectively resolving student misconceptions, as a practical example. Using an imperfect but illustrative analogy from computer programming, we could consider confusion and misconceptions as “bugs” in a student’s reasoning \citep{brown_repair_1980, vanlehn_mind_1990}.  Both fixing and not creating new bugs require expert precision, which may not be a strength of the current GPT models when supporting struggling coders. 


\subsection{Demands placed on educators}
Public K12 educators have deep insight into their specific students and their idiosyncratic teaching tendencies. But critically, the average K12 teacher is not an education expert at the level needed and assumed by researchers and developers for many edtech questions, resources, and products. This is in no way saying that researchers and developers should not consult educators. On the contrary, K12 educators know things about the day-to-day dynamics within their classroom that researchers, developers, and designers can learn from \citep{kizilcec_advance_2024}. This hyperlocalized expertise about the children in their custody may explain why most parents feel like their schools are headed in the right direction while most think that overall schooling is not headed in the right direction \citep{brenan_k-12_2021,horowitz_parents_2022,saad_americans_2022}. However, hyperlocalized educator expertise does not mean that: 
\begin{enumerate}[nosep]
    \item the educator is effective and that their teaching practices result in significant learning gains for students;
    \item the educator’s best practices and insights generalize to other (a) students, (b) classrooms, (c) teachers, (d) content areas, (e) grades/ages, (f) schools, or (g) contexts;
    \item the task or idea the educator is being asked to do, evaluate, or provide feedback on is a task about which they are expert and about which they can accurately identify the conditions needed for its generalization; nor
    \item the educator is aware of the extent to which they have the expertise described here.
\end{enumerate}
These criteria of expected expertise can easily be extended by replacing “educator” in the above with “researcher” or “technologist”. Finding qualified subject matter experts is critical to high-quality solutions \citep{zhou_how_2023}. As important as working with and understanding the needs of K12 educators is for finding solutions, edtech research and products are typically not built for a handful of specific teachers with hyperlocalized expertise, but for education contexts generally. 
\citeauthor{ho_reliability_2013} found that 
school administrators---with more general expertise---were more discerning of differences in teaching quality and produced ratings that were 12\%--25\% more reliable compared to other teachers with similar/different teaching certifications, respectively 
\citep{ho_reliability_2013}.  In education literature this is at least in part because school administrators have more generalizing power from regular exposure to different classrooms. However, even school administrators benefit from localized expertise: Ho and Kane found that administrators from the same school as the instruction being evaluated were 18\% more reliable than non-local administrators.

Expanding the sample of or crowdsourcing across educators may not lead to quality annotations, effective solutions, or generalization across contexts \citep{macina_opportunities_2023}. For some K12 materials and practices, a majority of teachers (and even professors of education) may not be implementing practices that are known in research to be more effective, such as the disparity between known effective practices for early literacy instruction  and common practices in schools and teacher prep programs \citep{foorman_foundational_2016,kurtz_early_2020,rix_new_2023,solari_translational_2020}.

The Principle of Precision applies across continua of quality in education content creation, which represents most current applications of GPT models as educator assistants. While there is clear evidence about the importance of using high quality instructional materials \citep{boser_hidden_2015,edreports_state_2023,kaufman_changes_2018,opfer_implementation_2017,steiner_curriculum_2017,tntp_opportunity_nodate}, most of these materials are either not easily accessible or have a large enough presence online for the purposes of model training. Unfortunately, most of the K12 instructional materials easily accessible online by and popular among educators are not high quality \citep{northern_dear_2019,polikoff_supplemental_2019}. Although automatically generating materials may appear to save teachers time, it may cost students learning time if the ignores the important attributes of instructional materials such as curricular coherence, meeting criteria for “high quality”, and teachers’ abilities to both recognize those criteria and use curricula effectively \citep{chu_fundamental_2021,edreports_review_2021,kane_teaching_2016,polikoff_exploring_2020,short_elements_2020,tntp_opportunity_2024}. Poorer quality materials 
and the \textit{Paradox of Free Advice} would logically lead to the exacerbating Matthew Effect \citep{acemoglu_tasks_2022,capraro_impact_2024}. While techniques for content generation are improving \citep{balepur_expository_2023,rooein_beyond_2024}, the criteria for high quality instructional materials, the coherence in the learning, and the capacity of the teacher to recognize and deliver such content are challenges not yet addressed.

\subsection{Potential solutions to challenges}
\paragraph{Maximizing Education Expertise}
For improving expertise in the field of edtech products, we recommend that researchers and developers acquire the needed expertise with much more intentionality by selecting few experts who, when combined, jointly maximize expertise relevant to the target audiences and contexts of generalization. If improvements to student outcomes are desired, K12 experts should have a demonstrated track record of positively impacting student outcomes, and the generalizable insights they make should correspond with the contexts of their track record. If an expert does not allow for generalization into some content or context of interest, experts should be added. Non-K12 subject matter experts in academia or industry should likewise represent the breadth of the content and should include expertise in the assessment of intended student outcomes. We make the claim that the skills and capacity needed to lead initiatives that are high-quality, impactful to student outcomes, equitable, and scalable are exceedingly rare. This skill set is not found in an average K12 educator, so greater effort must be made by all.

School leaders working with teachers to improve the quality of instruction typically evaluate the teacher's proficiency in a range of competencies (typically measured during in-class observation and evaluation on a teaching rubric; see \citealt{aguilar_developing_2013, bambrick-santoyo_get_2016,bambrick-santoyo_leverage_2018}), then determine which competencies are most important to improve first (i.e., which change will have the biggest impact on student learning), and then provide supportive feedback and coaching. Without strong expertise and measures of student learning, it is challenging for practitioners to prioritize instructional needs and aligned practices from among the many elements of good teaching \citep{saphier_skillful_2008,darling-hammond_what_2014,hammond_culturally_2015,lemov_teach_2015,lemov_teach_2021,liljedahl_building_2021,darling-hammond_implications_2020,schwartz_abcs_2016} and for researchers to empirically quantify the impact of good teaching practices \citep{pianta_conceptualization_2009,charalambous_13_2019,blazar_challenges_2022,jurenka_towards_2024}.

\paragraph{Measuring what matters }
Additionally, we would like to add that perhaps the most important direction for evaluating future edtech work is improving the ability of models or systems to accurately recognize the state of student learning. The field cannot correct what it cannot detect. And, at present, the Paradox of Free Advice applies to developers and researchers in the presence of GPT model speciousness and current practices for K12 expertise. Meaningful, rapid iteration on edtech tools and products cannot be without reliable and valid measures of student learning. 
\paragraph{Dealing with data deficiency }

Distributional pluralistic alignment of models in new tasks is a difficult due to limited knowledge of how to calibrate models to be more representative \citep{sorensen_roadmap_2024}. Research in pretraining data provenance is a promising area of research for addressing biases. For example, studies have shown that performant models can be constructed by curating smaller, more manageable datasets \citep{shi_-context_2024} and then training individual “expert” models from appropriate data which can be combined into a final model \citep{muennighoff_olmoe_2024}. Such techniques have been used to address copyright violations \citep{he_fantastic_2024}. 

However, training on authentic education data, such as transcripts from learning, may not be free of bias or may not support improved model performance toward student learning outcomes without some meaningful interpretation. Being able to identify whether a particular artifact is high-quality is key to improving performance. One possible approach to explore is to collaborate with qualitative researchers who have spent countless hours coding studies \citep{ward_handbook_2020,saldana_coding_2016,mullet_general_2018,philip_articulating_2018} to combine smaller dataset studies. The meaningful curation and combination of such data could serve as a starting point for use of authentic K12 data during pretraining.
\paragraph{Synthetic substitutions}
Even when faced with the challenge of a lack of quality data, we recommend minimizing the use of synthetic K12 educational content. Use of such content would risk perpetuating gaps in equitable learning as models trained thusly increasingly and irreversibly forget the original distributional tails \citep{shumailov_ai_2024,whitney_real_2024,van_der_gun_artificial_2024}. A more expertly built smaller model, even if limited in scope (see for example \citeauthor{hardy_all_2024}), which is more robustly evaluated, could support the improved annotation of existing K12 artifacts or act as a decision-making member multi-agent ensemble.

\section*{Acknowledgments}
We thank Ben Domingue and Chris Piech for their time and insights. We are also grateful for the feedback from members of Stanford Trustworthy AI Research (STAIR) Lab, Stanford Lytics Lab,  Piech Lab, Stanford's Social NLP working group, and Stanford's ``Researching, Presenting and Publishing Work in AI \& Education'' Group from Spring 2025. Finally, we thank the reviewers for their constructive feedback.


\bibliography{references}

\appendix

\section{Datasets for Supporting Classrooms}
Meaningful representation of student classroom learning is absent on the internet largely for laws protecting children’s privacy. The text for the data was only made anonymized and public in 2023 \citep{demszky_ncte_2023} and is not ``crawlable'' nor is it easy to link these data with the (also not crawlable) annotations and VAMs \cite{kane_national_2015}. It is the linking of these two sources that makes it 1 of 2 extant datasets having classroom interactions with both expert ratings on real teaching instruments and VAMs. The second dataset (the “MET Project”) is not publicly available. Part of our interest in doing this research is because conducting these studies is extremely expensive. (At least 2 organizations are trying to collect more data that meet these criteria.) Regardless, these datasets are notoriously challenging; \citeauthor{xu_promises_2024} use ½ page to articulate challenges with this data. \citeauthor{ho_reliability_2013} give a measure of success for humans at reliability of 0.65--our initial target for using LLMs. LLMs initially demonstrated high consistency (increasing reliability), but then we discovered the misalignment. Ideally success would lead to ratings that improve human reliability \cite{hardy_all_2025} and alignment with robust checks to avoid unhelpful LLM contributions.

\subsection{Observation Instruments}\label{apx:mqi_class}

For each of the observation instruments, the abbreviation codes used in this study are listed with the expanded names in Table \ref{table:items}. The distributions of scores across all items for all rater families are in Figure \ref{fig:ratingdistributions}. The CLASS rubric has 12 items on a scale from 1 to 7, rated at 15 minute intervals. The MQI rubric has 13 items on a scale from 1 to 3, rated at 7.5 minute intervals.

The mathematics-specific MQI and general CLASS frameworks are explicitly and implicitly capturing different dimensions of classroom instruction \cite{blazar_attending_2017} for human raters. 

\begin{figure*}[htbp]
    \centering
    \includegraphics[width=1\linewidth]{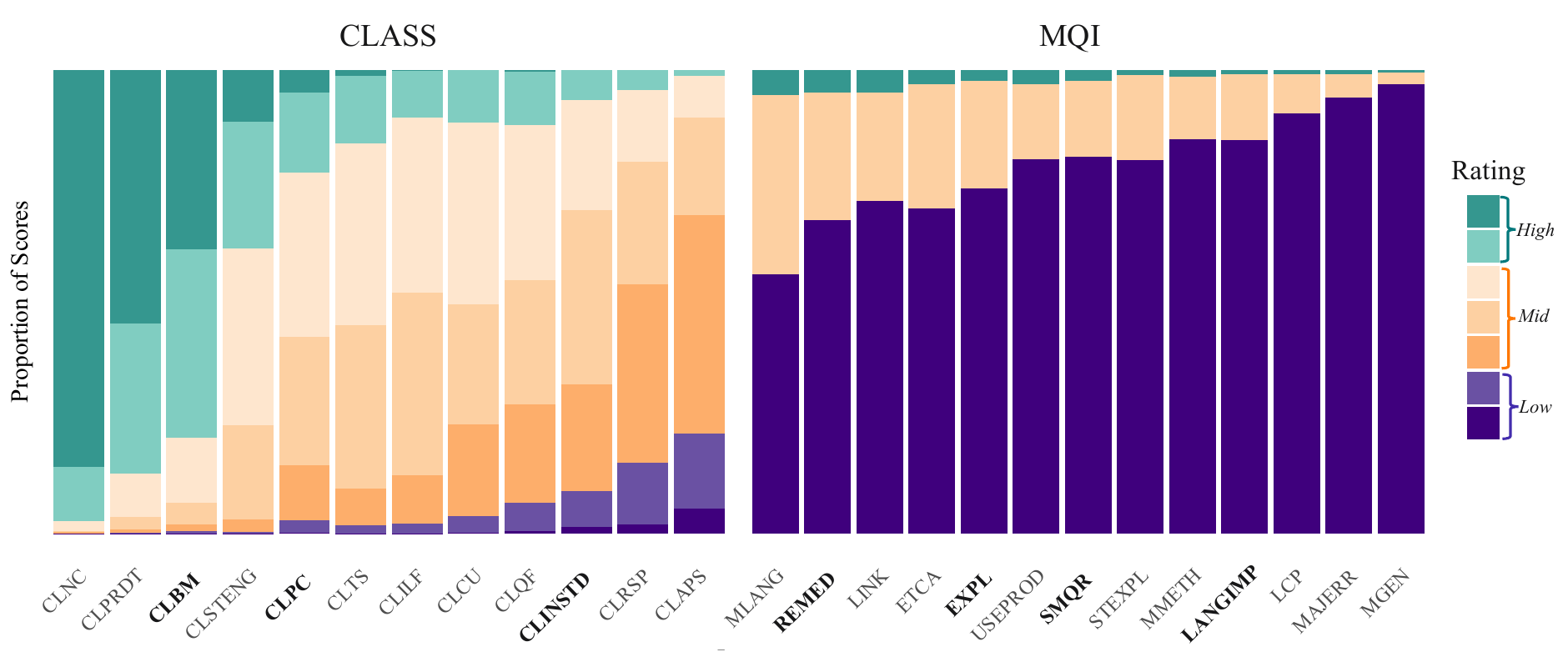}
    
    \caption{Proportions of human rater scores by item.}
    \label{fig:ratingdistributions}
\end{figure*}

    

\paragraph{The MQI Framework (13 items)}
The MQI framework is intended to detect specific classroom practices, assuming that much of the classroom discourse will not be relevant for each item and are one-inflated. The 13 MQI items\footnote{Instruments for classroom instruction are composed of multiple items that represent distinct instructional dimensions to be evaluated} within the dataset have at least two human raters per classroom observation. The MQI instrument has four instructional domains that capture information on the quality of teaching and learning: Richness of Mathematics, Working with Students, Student Participation, and Errors and Imprecisions. This paper will focus on one item from each of the four MQI domains: teacher explanations (EXPL), remediation of student errors (REMED), student questioning and reasoning (SMQR), and imprecision in mathematical language (LANGIMP), respectively. For ease of interpretation in this study, LANGIMP is reverse-coded, so higher scores are better. (For additional information about the MQI instrument, see \cite{hill_mathematical_2008, hardy_all_2024,hill_when_2012,kane_gathering_2012}).

\paragraph{The CLASS Framework (12 items)}
The CLASS items capture broader constructs and assume that most of the classroom discourse is relevant. These differences can be seen visually in Figure \ref{fig:ratingdistributions}. The 12 CLASS items have one raters per classroom observation. Prior work has shown that the CLASS items generally have higher human agreement than MQI items \cite{kane_gathering_2012,kane_national_2015}. Three items from the CLASS instrument will be analyzed, following prior work \cite{wang_is_2023}: behavior management (CLBM), instructional dialogue (CLINSTD), and positive classroom climate (CLPC).

\subsection{Replication Study Details}
The present study contains two replication studies; we used the same settings, sampling, and data as specified in each original \citep{wang_is_2023,hardy_all_2025}. We did not alter the prompts, samples, nor settings when replicating the original studies. Footnote 3 has a link to the full replication test set and access to the full prompts used. Wang \& Demsky (2023) also provide a link on the first page of their study (as well as many appendix pages of prompt templates). We used the same hyperparameter settings (temperature 0, completion) as found in the original code. 

Additionally, for verifiability and replication, Footnote 3 also has the outputs of each prompt in the present study (where failure modes can be identified and claims verified), with lookup columns that match the data from Wang \& Demszky. 
\subsubsection{Test Set and Tasks}
Building on prior work, we adopt the zero-shot test set and prompts established by \cite{wang_is_2023}. This test set consists of a stratified sample of lessons from the NCTE data. Our evaluation covers seven distinct rating tasks (i.e., instrument items): four from the MQI framework (teacher explanations, remediation of student errors, student questioning, and precision of language) and three from the CLASS framework (behavior management, instructional dialogue, and positive climate). For the measures of alignment, we maximize  content differences by selecting one item from each of the four empirical factors identified in \cite{blazar_attending_2017,kane_national_2015}. We select the items with the highest interrater reliability (Cohen's $\kappa$ in Appendix of \cite{kane_national_2015}) where available, otherwise, test set item with the largest factor loading. 

\subsubsection{Models and Prompts}
We selected 4 of the LLMs that performed the best at the time of the experiment, chosen from the HELM leaderboard \cite{liang_holistic_2023} and the ``Pedagogy Benchmark'' \cite{lelievre_benchmarking_2025}, and then prioritized diversity of models. We also included Google's \textit{LearnLM} due to its stated focus on education \cite{learnlm_learnlm_2024}. Following prior work \cite{liang_holistic_2023,wang_is_2023}, we query each model using three distinct prompting strategies for each task: (1) a \textbf{base prompt} with the core instructions, (2) a \textbf{chain-of-thought} prompt encouraging step-by-step reasoning \cite{wei_chain--thought_2023}, and (3) a prompt that acts like a \textbf{retrieval-augmented generation (RAG)} by including additional, relevant task-specific rubric information. All for LLM calls are available online.
\footnote{\url{https://github.com/hardy-education/measuring_llm_impact_alignment}}  
All model predictions are publicly available to support future research for reproducibility (see Footnote 3). 

Below are three examples for three different tasks for each of the three prompt template families from \citeauthor{wang_is_2023} (where all examples of other prompt templates can be found): Base prompt for the Remediation of Student Errors (REMED) task, a RAG-like, additional-details prompt template for the Language Imprecision (LANGIMP) task, and the reasoning/chain-of-thought template for the Classroom instructional dialogue (CLINSTD) task.

REMED (Base):

\texttt{\\
Consider the following classroom transcript.  
\\
\\
Transcript: \{transcript\}
\\
\\
Based on the classroom transcript, rate the teacher’s degree of remediation of student errors and difficulties on a scale of 1-3 (low-high). This means that the teacher gets at the root of student misunderstanding, rather than repairing just the procedure or fact. This is more than a simple correction of a student mistake.
\\
\\
Rating (only specify a number between 1-3):}

LANGIMP (simple RAG-like/extra information)

\texttt{\\
Consider the following classroom transcript.  
\\
\\
Transcript: \{transcript\}
\\
\\
Based on the classroom transcript, rate the teacher’s imprecision in language or notation on a scale of 1-3 (low-high). The teacher’s imprecision in language or notation refers to problematic uses of mathematical language or notation. For example, errors in notation (eg. mathematical symbols), in mathematical language (eg. technical mathematical terms like ``equation'') or general language (eg. explaining mathematical ideas or procedures in non-technical terms). Do not count errors that are noticed and corrected within the segment.  
\\ \\
Explanation of ratings: \\
1: Brief instance of imprecision. Does not obscure the mathematics of the segment. \\
2: Imprecision occurs in part(s) of the segment or imprecision obscures the mathematics but for only part of the segment. \\
3: Imprecision occurs in most or all of the segment or imprecision obscures the mathematics of the segment.
\\
\\
Rating (only specify a number between 1-3):}

CLINSTD (Reasoning)

\texttt{\\
Consider the following classroom transcript.  
\\
\\
Transcript: \{transcript\}
\\
\\
Please do the following. \\ 
1. Think step-by-step how you would rate the instructional dialogue of the teacher on a scale of 1-7 (low-high). Instructional dialogue captures the purposeful use of content-focused discussion among teachers and students that is cumulative, with the teacher supporting students to chain ideas together in ways that lead to deeper understanding of content. Students take an active role in these dialogues and both the teacher and students use strategies that facilitate extended dialogue. \\
2. Provide your rating as a number between 1 and 7.  Format your answer as: Reasoning: Rating (only specify a number between 1-7):  \\ \\ 
Reasoning:
}

Additionally, while not the focus of this study, we replicated the SOTA models of \cite{hardy_all_2025} to have confidence that the misalignment we observed were not the result of an impossible task using only transcripts. These encoders and those from \cite{hardy_all_2025} are shown as baselines in Fig. \ref{fig:alignment}.  This results in 103,148 total observations across models, tasks and prompts.  Additionally, while not the focus of this study, we replicated the SOTA models of \cite{hardy_all_2025} to have confidence that the misalignment we observed were not the result of an impossible task using only transcripts. These encoders and those from \cite{hardy_all_2025} are shown as baselines in Fig. \ref{fig:alignment}.

\begin{figure*}[htbp]
    \centering
    \includegraphics[width=1\linewidth]{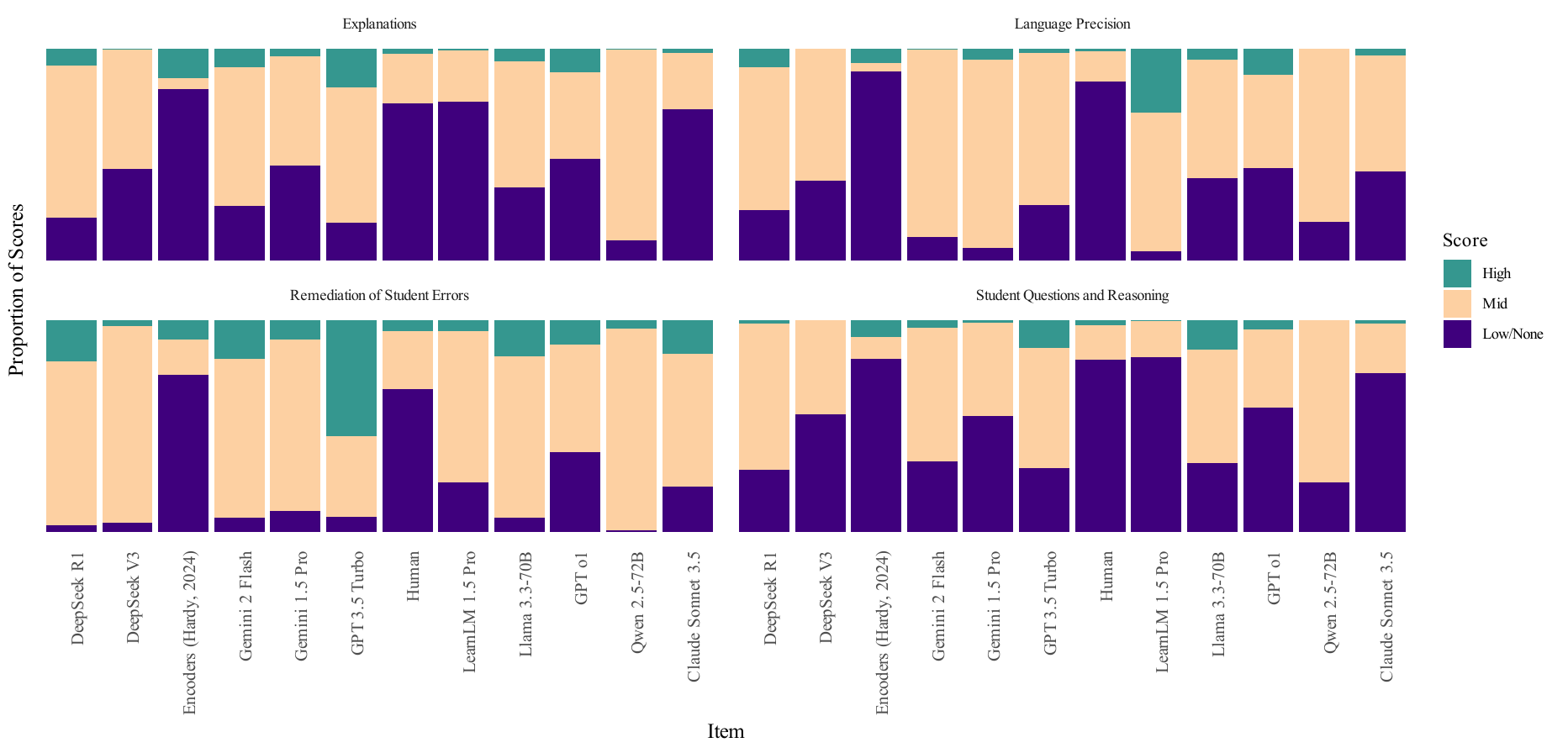}
    
    \caption{Proportions of rater scores by MQI item.}
    \label{fig:ratingdistributions_mqi}
\end{figure*}

    

\begin{table*}[h]
    \centering
    \setlength\tabcolsep{2pt}\renewcommand\defaultaddspace{1.5ex}
        \caption{CLASS and MQI item descriptions and corresponding abbreviations from the test set of \citeauthor{wang_is_2023}. \dag denotes items that are reverse coded due to being negatively worded with respect to the construct of teacher ability. \textbf{Bolded} items are the focus items of the present alignment study. Bracketed item descriptions are the names of the factors with highest identified loadings by \citeauthor{blazar_attending_2017} per category in Appendix 2.b of the original study \cite{kane_national_2015}. While in the present impact alignment study we only evaluated four items, we performed a full replication of all seven items, the responses and outputs of which can be found in the online data. Variance decompositions for all seven items on the test set can be found in Appendix \ref{apx:vardecomp}.}
    \begin{tabularx}{1\textwidth}{@{} l>{\hsize=0.65\hsize}X>{\hsize=1.30\hsize}X>{\hsize=1.05\hsize\arraybackslash} X @{}}
    \toprule
    \textbf{Item} & \textbf{Item Name} & \textbf{[Factor] Item Description} \\
    \hline
    \hline
    \underline{\textbf{MQI}} & \\
    EXPL &\textit{Teacher Explanations} & [Mathematical Instruction] Teacher explanations that give meaning to ideas, procedures, steps, or solution methods. \\
    \textbf{LANGIMP}\dag & \textit{Imprecision in Language or Notation} & [Mathematical Errors] Imprecision in language or notation, with regard to mathematical symbols and technical or general mathematical language.  \\
    \textbf{REMED} & \textit{Remediation of Student Errors and Difficulties} & [Mathematical Instruction] Remediation of student errors and difficulties addressed in a substantive manner. \\
    SMQR & \textit{Student Mathematical Questioning and Reasoning} & [Mathematical Instruction] Student mathematical questioning and reasoning, such as posing mathematically motivated questions, offering mathematical claims or counterclaims. \\
    \midrule
    \underline{\textbf{CLASS}}  & \\
    CLPC & \textit{Classroom Positive Climate} & [General Instruction] The relationships among teachers and students, and the warmth, respect, and enjoyment communicated by verbal and nonverbal interactions.\\
    \textbf{CLBM} & \textit{Behavior Management} &  [Classroom Organization] The use of effective methods to encourage desirable behavior and prevent and redirect misbehavior. \\
    \textbf{CLINSTD} & \textit{Instructional Dialogue} & [General Instruction] The purposeful use of dialogue across the class to facilitate students’ understanding of content including structured, cumulative questioning and discussion which guide and prompt students. \\
    \bottomrule
    \end{tabularx}

    \label{table:items}
\end{table*}

\section{Challenges in VAM signal}
Human-VAM alignment for a given 15 minute rating on one dimension of teaching will necessarily be small. But across thousands of observations, we would expect better teaching to be aligned with better outcomes.  To test this before the main study, we performed 6 robustness checks, excluded only for brevity, including highly robust estimators between expert ratings and VAM. Summary below:
\subsection{Thiel-Sen Estimator}
Thiel-Sen (TS) estimator \citep{sen_estimates_1968} is a non-parametric method very closely related to Kendall's $\tau$. It computes the median of all slopes between point pairs: $\hat{\beta} = \text{median}\left(\frac{y_j - y_i}{x_j - x_i}\right)$. We would expect a positive median slope, $\hat{\beta} < 0$ as a direct analogue of the expected relationship of Kendall's $\tau$ \citep{kendall_new_1938}. 
\subsection{Repeated Median Estimator}
Siegel’s Repeated Median Estimator (RME) is a second highly robust estimator of linear relationships with the highest breakdown point of 50\% \citep{siegel_robust_1982}. It estimates the slope of the regression line $y = A + Bx$ for a set of points. For our purposes, if humans raters fail to have a positive RME slope with respect to VAM on an item, the item either has too high a proportion of bad data or no meaningful relationship. $\hat{B} = \underset{i}{\operatorname{median}} \ \underset{j\,\ne\,i}{\operatorname{median}} \ (Y_j - Y_i) / (X_j - X_i)$ where the inner median is the median of the slopes connected to this observation, similar to the Thiel-Sen Estimator, and the outer median is the median slope across all items for the inner slopes. For signal to be robustly identified, we expert humans ratings should have a positive relationship with VAM, measured by $\tau$, the 90\% lower CI of $\tau$, TS, and RME. We also test that $\tau$ is greater than the $\tau$ generated by random sampling and we perform a quartile test for difference between rating Q1 and Q4 and VAM \citep{kane_estimating_2008}, as shown in Table \ref{tab:robustness}.

\begin{table*}[!htbp]
\centering
\small
\caption{Robustness Test Results Across Different Tasks}
\label{tab:robustness}
\begin{tabular}{lcccccccc}
\toprule
Task & $\tau > 0$ & \shortstack{Low. 90\% \\ CI $> 0$} & \shortstack{Thiel Sen \\ $> 0$} & RME $> 0$ & \shortstack{$T >$ \\ Rand} & $T \geq T$. Exp & $Q_4>Q_1$ & \shortstack{Robust. \\ Pass Rate} \\
\midrule
CLBM    & 0.07, Y & 0.06***, Y & 0.013, Y & 0.008, Y & $-0.04$, Y & \shortstack{0.0004, Y \\ ($-0.04$, 0.04)} & 0.02**, Y & 100 (7/7) \\
CLINSTD & 0.02, Y & 0.003*, Y  & 0.003, Y & 0.0002, Y & 0.004, Y & \shortstack{0.0004, Y \\ ($-0.04$, 0.04)} & 0.02**, Y & 100 (7/7) \\
LANGIMP & 0.12, Y & 0.10***, Y & 0.06, Y  & 0.05, Y   & 0.03, Y  & \shortstack{0.04, Y \\ (0.000, 0.08)} & 0.03**, Y & 100 (7/7) \\
REMED   & 0.02, Y & 0.001*, Y  & 0.008, Y & 0.004, Y  & $-0.01$, Y & \shortstack{0.04, Y \\ (0.000, 0.08)} & 0.01, N   & 71 (5/7) \\
\bottomrule
\end{tabular}
\begin{tablenotes}
\footnotesize
\item Note: Y indicates the test passed, N indicates failure. Significance levels: * $p < 0.10$, ** $p < 0.05$, *** $p < 0.01$.
\end{tablenotes}
\end{table*}

Given the tests in Table \ref{tab:robustness}, we take the position that, while the signal is faint, it is indeed positive and distinguishable from the noise. One way to contextualize how we can extract the signal over many observations is to consider that VAMs have about the same accuracy as predicting year-over-year ERA\footnote{Earned Run Average (ERA) is the average number of earned runs a pitcher allows per nine innings, calculated as (Earned Runs \(\times \) 9) / Innings Pitched. It is the primary statistic for measuring pitcher effectiveness.} of professional baseball pitchers \citep{mccaffrey_intertemporal_2009}. This task is harder, as would be the predicting of pitcher ERA from only watching a few innings of one game: on its own, there isn't enough signal, but in aggregate we assume that there is.

\subsection{VAMs in other Research}\label{app:priorvam}
Prior rater-to-VAM studies that used a much stronger signal by aggregating human rater scores and employing the far more forgiving Pearson correlation found that the significant overall relationships for MQI items were 0.09 and for CLASS items were 0.18 \citep{kane_gathering_2012}, with statistical significance despite the noise, compared to the Kendall $\tau_b$ 0.11/0.03 and 0.14/0.06 reported for the humans in Figure \ref{fig:alignment}. Concerned with the question of text-only signal identification, we also did the replication of \cite{hardy_all_2024} seen as the green diamonds in Figure \ref{fig:alignment}.

\subsection{High-noise Context Measurement}\label{sec:highnoise}
\subsubsection{Failure Mode of LLM Behavioral Homogeneity}
Behavioral homogeneity, in various forms, has been in literature and should not be surprising given what we know about pretraining \citep{mccoy_embers_2023}. Behavioral homogeneity is the first of the five studies, and should not be seen as the core finding: similar to contemporary studies on LLM correlated errors \citep{kim_correlated_2025}, we first demonstrate their existence. A key idea of the alignment studies herein is highlighting the importance of scrutinizing our evaluations and working on problems that are not easily verifiable. There need to be methods for measuring, even what may be already known, in contexts where measurement has previously been too difficult.

We posit that the knowledge of the types of failure modes found within this study have been in the literature for a long time, but that it may not be a shared belief across NLP researchers as LLMs increase in size and in ultracrepidarianism. K12 education is not the only domain affected by rushing research because the tasks of interest are very difficult to verify. Efforts to understand the economic impact\footnote{We note that this study evaluates downstream task, but does not, in fact, evaluate the ultimate relationship of the economic impact of the quality of the downstream task outputs.} of LLMs on professionals, such as GDPval \citep{patwardhan_gdpval_2025}, have systematically excluded classrooms and contexts involving children. Education is not alone in this either; social work, therapy, and other sectors are also affected beyond the ``top 9''. The issues of being able to produce meaningful measurements where F1 scores above 0.95 affect much more than education.

Thus, an interesting contribution of this paper is methodological: robust and principled methods for measuring outcomes in high-noise contexts. In our opinion, it is less interesting that our first finding is a possibly predictable failure mode and more interesting that our methods allowed us to detect it, given the high noise contexts. This may provide hope for other domains where downstream tasks are difficult to measure and intended impacts even more so. Detecting these failure modes, and then measuring their augmentation as LLMs are used in ensemble, is novel in these contexts.

\section{Variance Decomposition Model}\label{apx:vardecomp}

This appendix formalizes the structural decomposition used to attribute observed misalignment error to facets of a fully crossed evaluation design, introduced in Section~\ref{sec:vardecomp} and discussed in Section~\ref{sec:discvardec}. The aim is not merely to report error, but to identify which portions of error are (i) developer-controllable (\textsc{LLM}, \textsc{Prompt}), (ii) systemic and shared across implementations (\textsc{Item}, \textsc{Obs}, and their interactions), and (iii) irreducibly idiosyncratic at the granularity of a single score (cell-specific residual variation). While our application concerns classroom instruction, the design and estimands apply broadly to LLM evaluation whenever outputs are compared to noisy, multifaceted targets.

We present the framework with sufficient generality that it can be applied to any fully crossed evaluation design, not only in education but wherever LLM outputs are compared against noisy, multi-faceted ground truth.  We are not aware of prior use of this method in the ACL literature, and we hope it proves useful to researchers seeking to disentangle controllable from irreducible sources of error in high-noise evaluation contexts.
We hope the methods will support others in being able to understand how much of the error they are seeing in their models is irreducible in contexts where their signal is clouded by many facets of variation \citep{brennan_generalizability_2001,brennan_advanced_2001,brennan_coecients_2003}. 
Education happened to be the content matter of this study, but the study is about the ability to measure in high-noise contexts, with noisy labels and mismatched grain sizes. We hope this work helps to create a middle ground between “gold standard” labels and qualitative research: the “pewter standard” labels, perhaps.


\subsection{Model specification}\label{apx:modelspec}

Let the index set of the fully crossed design be $\mathcal{D} = \mathcal{C} \times \mathcal{I} \times \mathcal{M} \times \mathcal{P}$, where $\mathcal{C}$ denotes observed classroom transcript segments ($|\mathcal{C}| = N_C$), $\mathcal{I}$ rubric items ($|\mathcal{I}| = N_I$), $\mathcal{M}$ foundation models ($|\mathcal{M}| = N_M = 16$), and $\mathcal{P}$ prompting strategies ($|\mathcal{P}| = N_P = 3$).  For each $(c,i,m,p) \in \mathcal{D}$, let $\widetilde{X}_{cimp}$ be the standardized\footnote{The findings are robust to standardization types. We standardize here by dividing by two standard deviations (instead of one) to better align the relative scales.} LLM rating and $\widetilde{Y}_c$ the (pre-standardized) stacked value-added measure for classroom $c$.  The misalignment error is
\begin{equation}\label{eq:sqerr}
  \hat{e}_{cimp} \;=\; \bigl(\widetilde{X}_{cimp} - \widetilde{Y}_c\bigr)^2.
\end{equation}

\paragraph{Random-effects decomposition.}
We fit a fully crossed random-effects model over all main effects and interactions:
\begin{equation}\label{eq:apx_fullmodel_rewrite}
\hat{e}_{cimp}
= \mu \;+\; \sum_{\emptyset\neq \alpha \subseteq \{c,i,m,p\},\,|\alpha|\le 3}\nu_{\alpha} \;+\; \eta_{cimp},
\end{equation}
where each $\nu_\alpha$ is a mean-zero random effect indexed by the corresponding facet(s), $\nu_{\alpha}\sim\mathcal{N}(0,\sigma^2_{\alpha})$, mutually independent across $\alpha$, and $\eta_{cimp}\sim\mathcal{N}(0,\sigma^2_{\eta})$ is the cell-specific remainder. Because each $(c,i,m,p)$ cell is observed once, the four-way interaction is not separately identifiable from the residual; hence we report the combined term as
\[
\sigma^2_{cimp+\epsilon}\;:=\;\sigma^2_{\textsc{LLM:Prompt:Item:Obs}}+\sigma^2_{\epsilon}.
\]

\paragraph{Expanded Random-effects.} More explicitly, we can reformulate Eq. \ref{eq:apx_fullmodel_rewrite} in expanded form the disaggregated sum of random effects corresponding to all main effects and interactions in the fully crossed $I \times M \times P \times C$ design:
\begin{align}\label{eq:vardecomp_full}
  \hat{e}_{cimp} \;=\;\; & \mu + \underbrace{\nu_c + \nu_i + \nu_m + \nu_p}_{\text{main effects}} \nonumber\\
  &+ \underbrace{\nu_{ci} + \nu_{cm} + \nu_{cp} + \nu_{im} + \nu_{ip} + \nu_{mp}}_{\text{two-way interactions}} \nonumber\\
  &+ \underbrace{\nu_{cim} + \nu_{cip} + \nu_{cmp} + \nu_{imp}}_{\text{three-way interactions}} \nonumber\\
  &+ \;\varepsilon_{cimp},
\end{align}
where $\mu$ is the grand mean, each random effect is normally distributed with mean zero,
$  \nu_{\alpha} \sim \mathcal{N}(0,\,\sigma^2_{\alpha}) \quad \text{for each index set } \alpha \subseteq \{c,i,m,p\},\; |\alpha|\ge 1$,
and $\varepsilon_{cimp} \sim \mathcal{N}(0,\sigma^2_{\varepsilon})$ is the residual.  All random effects are mutually independent.  

\paragraph{Variance share}
Let $\mathcal{S}$ denote the set of modeled variance components (all $\sigma^2_{\alpha}$ plus $\sigma^2_{\eta}$). The \emph{variance share} of component $\alpha$ is
\begin{equation}\label{eq:varshare}
  \pi_{\alpha} \;=\; \frac{\sigma^2_{\alpha}}{\sigma^2_{\mathrm{total}}}, \; \sigma^2_{\mathrm{total}} = \sum_{\alpha}\sigma^2_{\alpha} + \sigma^2_{\varepsilon}=\sum_{k\in\mathcal{S}}\sigma^2_k.
\end{equation}
\noindent Posterior estimates of $\pi_\alpha$ for all 15 components appear in Table~\ref{tab:sq_err_decomp} (summary) and Table~\ref{tab:app_sq_decomp_full} (full fit statistics including mean, SD, frequentist-style CI, and $\hat{R}$).


\subsection{Sign-preserving estimations for shared behaviors in errors}\label{apx:signpreserve}

The squared-error formulation in Eq.~\ref{eq:sqerr} and reported in Table \ref{tab:app_sq_decomp_full} discards the direction of misalignment (whether an LLM over- or under-rates relative to VAM).  To recover this information and better estimate the degree to which LLM errors move together, we fit two additional models using the \emph{signed magnitude}:
\begin{equation}\label{eq:signerr}
  \hat{e}^{\pm}_{cimp} \;=\; \bigl|\widetilde{X}_{cimp} - \widetilde{Y}_c\bigr|\;\cdot\;\bigl(\widetilde{X}_{cimp} - \widetilde{Y}_c\bigr),
\end{equation}
which preserves the sign while retaining quadratic scaling. We estimate it twice: once using the items from the present study $\hat{e}^{\pm}$ and a second estimation, $\hat{e}^{\pm}_{(+)}$, includes all of the items from the original replication study (for a total of seven) to provide more observations of behavior.  The results of these decompositions are in Tables \ref{tab:app_sgn_sq_decomp_full} and \ref{tab:app_sgn_sq_decomp_full_addl_items}, respectively. 

From this latter estimation we find that at most $56.4\%\pm10.2\%, \hat{R}=0.99$ of the variance in signed error is attributable to factors involving the LLM or prompt, reinforcing the conclusion that roughly half of the misalignment variance arises from facets outside a developer's control. Note that these additional three items did not undergo robustness checks and were not used in the main body of the study because they had weaker loadings on constructs than other items in their same factor category \citep{blazar_attending_2017}. See the parallel analyses performed on the human data in Table \ref{tab:hum_err_decomp}.

\subsection{Bayesian estimation details}\label{apx:estimation_rewrite}

We estimate Eq.~\ref{eq:apx_fullmodel_rewrite} using Bayesian multilevel modeling (\texttt{brms}). Each standard deviation parameter receives a weakly informative half-$t$ prior:
\[
\sigma_k \sim \mathrm{Student\text{-}t}^{+}(3,0,2.5)\qquad \forall k\in\mathcal{S}.
\]
Convergence is assessed via rank-normalized split-$\hat{R}$; all reported models satisfy $\hat{R}\approx 1$ (see tables). A frequentist REML fit with the identical formula yields consistent component ordering and comparable point estimates, providing a second check that the variance partition is not an artifact of Bayesian regularization. $\hat{R}$ values were computed using rank-normalized split chains following \citet{vehtari_rank-normalization_2021}.  The formula and all hyperparameters (iterations, chains, cores, thinning) are shown in Figure~\ref{fig:bayes}.

\begin{figure*}
    \centering
    \includegraphics[width=1\linewidth]{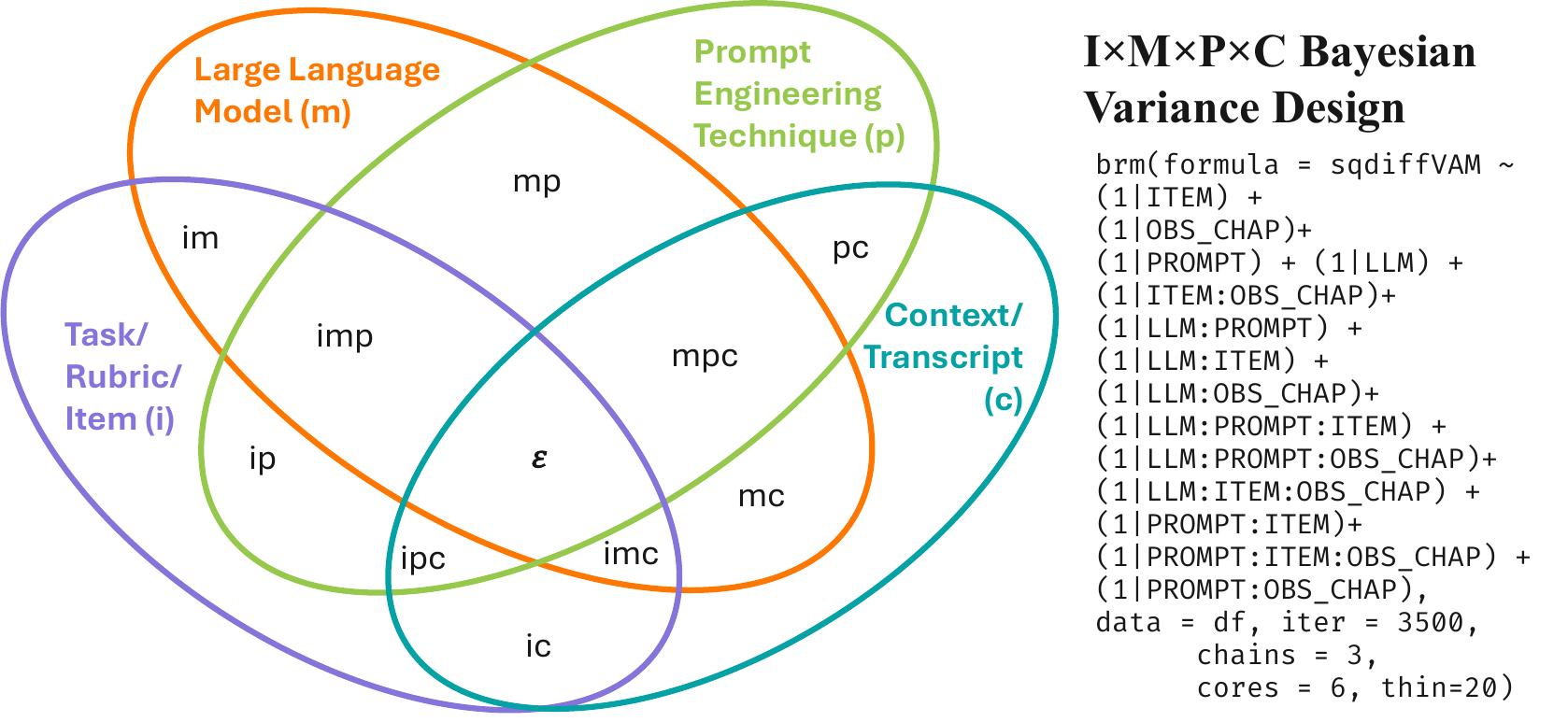}
    \caption{\textbf{Bayesian Error Variance Decomposition}: (\textit{left}) fully crossed facet diagram for sources of variance. (\textit{right}) corresponding \texttt{brms} code listing. 
    }
    \label{fig:bayes}
\end{figure*}

\begin{table*}[ht]
\centering
\caption{\textbf{Bayesian Squared Error Variance Decomposition} $(\hat{e})$ Parameter Estimates and Fit Statistics}\label{tab:app_sq_decomp_full}
\resizebox{\ifdim\width>\linewidth\linewidth\else\width\fi}{!}{
\begin{tabular}{lcccccccc}
\toprule
variable & median & MAD & MAP & HDI & mean & SD & CI & $\hat{R}$\\
\midrule
ITEM & 0.08 & 0.08 & 0.03 & [0.01,0.62] & 0.14 & 0.17 & [0.01,0.54] & 0.99\\
LLM & 0.07 & 0.04 & 0.06 & [0.02,0.18] & 0.08 & 0.05 & [0.02,0.16] & 1.00\\
OBS & 0.02 & 0.01 & 0.02 & [0,0.03] & 0.02 & 0.01 & [0.01,0.03] & 1.01\\
PROMPT & 0.02 & 0.03 & 0.00 & [0,0.58] & 0.08 & 0.15 & [0,0.46] & 1.00\\
\addlinespace

ITEM:OBS & 0.04 & 0.01 & 0.05 & [0.01,0.06] & 0.04 & 0.01 & [0.02,0.06] & 1.00\\
LLM:ITEM & 0.03 & 0.01 & 0.03 & [0.01,0.06] & 0.03 & 0.01 & [0.01,0.05] & 1.00\\
LLM:OBS & 0.00 & 0.00 & 0.00 & [0,0] & 0.00 & 0.00 & [0,0] & 1.00\\
LLM:PROMPT & 0.02 & 0.01 & 0.01 & [0,0.04] & 0.02 & 0.01 & [0.01,0.03] & 1.00\\
PROMPT:ITEM & 0.01 & 0.01 & 0.00 & [0,0.05] & 0.01 & 0.01 & [0,0.03] & 1.01\\
PROMPT:OBS & 0.00 & 0.00 & 0.00 & [0,0] & 0.00 & 0.00 & [0,0] & 1.00\\
\addlinespace
LLM:ITEM:OBS & 0.19 & 0.03 & 0.21 & [0.05,0.23] & 0.18 & 0.05 & [0.07,0.23] & 0.99\\
LLM:PROMPT:ITEM & 0.03 & 0.01 & 0.03 & [0.01,0.05] & 0.03 & 0.01 & [0.01,0.05] & 1.01\\
LLM:PROMPT:OBS & 0.14 & 0.02 & 0.15 & [0.04,0.17] & 0.13 & 0.03 & [0.05,0.17] & 1.00\\
PROMPT:ITEM:OBS & 0.02 & 0.00 & 0.02 & [0.01,0.03] & 0.02 & 0.01 & [0.01,0.03] & 1.00\\
LLM:PROMPT:ITEM:OBS +  $\epsilon$ & 0.24 & 0.04 & 0.26 & [0.06,0.29] & 0.22 & 0.06 & [0.09,0.28] & 1.00\\
\bottomrule
\end{tabular}}
\end{table*}

\begin{table*}[h]
\centering
\caption{\textbf{Bayesian Signed Squared Error Variance Decomposition} $(\hat{e}^\pm)$ Parameter Estimates and Fit Statistics}\label{tab:app_sgn_sq_decomp_full}
\resizebox{\ifdim\width>\linewidth\linewidth\else\width\fi}{!}{
\begin{tabular}{lcccccccc}
\toprule
variable & median & MAD & MAP & HDI & mean & SD & CI & $\hat{R}$\\
\midrule
ITEM & 0.03 & 0.04 & 0.01 & [0,0.34] & 0.07 & 0.09 & [0,0.22] & 1.00\\
LLM & 0.09 & 0.04 & 0.07 & [0.03,0.24] & 0.10 & 0.05 & [0.03,0.19] & 1.01\\
OBS & 0.03 & 0.01 & 0.03 & [0.01,0.05] & 0.03 & 0.01 & [0.01,0.04] & 1.01\\
PROMPT & 0.00 & 0.01 & 0.00 & [0,0.5] & 0.06 & 0.13 & [0,0.47] & 1.05\\
\addlinespace

ITEM:OBS & 0.12 & 0.02 & 0.12 & [0.06,0.15] & 0.11 & 0.02 & [0.07,0.14] & 1.05\\
LLM:ITEM & 0.06 & 0.02 & 0.06 & [0.03,0.1] & 0.06 & 0.02 & [0.03,0.1] & 1.02\\
LLM:OBS & 0.00 & 0.00 & 0.00 & [0,0] & 0.00 & 0.00 & [0,0] & 1.02\\
LLM:PROMPT & 0.01 & 0.01 & 0.01 & [0,0.03] & 0.01 & 0.01 & [0,0.03] & 1.01\\
PROMPT:ITEM & 0.00 & 0.00 & 0.00 & [0,0.03] & 0.00 & 0.01 & [0,0.02] & 1.02\\
PROMPT:OBS & 0.00 & 0.00 & 0.00 & [0,0] & 0.00 & 0.00 & [0,0] & 1.00\\
\addlinespace

LLM:ITEM:OBS & 0.20 & 0.02 & 0.21 & [0.1,0.24] & 0.19 & 0.03 & [0.11,0.23] & 1.06\\
LLM:PROMPT:ITEM & 0.06 & 0.01 & 0.06 & [0.03,0.08] & 0.06 & 0.01 & [0.03,0.08] & 1.06\\
LLM:PROMPT:OBS & 0.13 & 0.02 & 0.14 & [0.07,0.16] & 0.13 & 0.02 & [0.07,0.15] & 1.05\\
PROMPT:ITEM:OBS & 0.02 & 0.00 & 0.02 & [0.01,0.03] & 0.02 & 0.00 & [0.01,0.03] & 1.06\\
LLM:PROMPT:ITEM:OBS +  $\epsilon$ & 0.17 & 0.02 & 0.17 & [0.08,0.19] & 0.16 & 0.03 & [0.09,0.19] & 1.05\\
\bottomrule
\end{tabular}}
\end{table*}

\begin{table*}[h]
\centering
\caption{\textbf{Bayesian Signed Squared Error Extended Variance Decomposition $(\hat{e}^\pm_{(+)})$} Parameter Estimates and Fit Statistics for all replication study items. Note the large difference in Item variation when other items are included and the decrease in Prompting volatility.}\label{tab:app_sgn_sq_decomp_full_addl_items}
\resizebox{\ifdim\width>\linewidth\linewidth\else\width\fi}{!}{
\begin{tabular}{lcccccccc}
\toprule
variable & median & MAD & MAP & HDI & mean & SD & CI & $\hat{R}$\\
\midrule
ITEM & 0.29 & 0.13 & 0.27 & [0.14,0.63] & 0.33 & 0.14 & [0.15,0.59] & 0.99\\
LLM & 0.04 & 0.03 & 0.04 & [0.01,0.16] & 0.05 & 0.04 & [0.01,0.12] & 0.99\\
OBS & 0.07 & 0.02 & 0.07 & [0.03,0.09] & 0.07 & 0.02 & [0.04,0.09] & 1.00\\
PROMPT & 0.00 & 0.00 & 0.00 & [0,0.23] & 0.02 & 0.06 & [0,0.13] & 1.01\\
\addlinespace
ITEM:OBS & 0.09 & 0.02 & 0.09 & [0.04,0.12] & 0.08 & 0.02 & [0.05,0.11] & 1.01\\
LLM:ITEM & 0.04 & 0.01 & 0.04 & [0.02,0.07] & 0.04 & 0.02 & [0.02,0.07] & 1.00\\
LLM:OBS & 0.01 & 0.00 & 0.01 & [0,0.01] & 0.01 & 0.00 & [0.01,0.01] & 1.00\\
LLM:PROMPT & 0.01 & 0.01 & 0.01 & [0,0.03] & 0.01 & 0.01 & [0,0.03] & 1.01\\
PROMPT:ITEM & 0.00 & 0.00 & 0.00 & [0,0.01] & 0.00 & 0.00 & [0,0.01] & 1.00\\
PROMPT:OBS & 0.00 & 0.00 & 0.00 & [0,0] & 0.00 & 0.00 & [0,0] & 1.02\\
\addlinespace
LLM:ITEM:OBS & 0.12 & 0.02 & 0.13 & [0.06,0.16] & 0.12 & 0.03 & [0.07,0.16] & 0.99\\
LLM:PROMPT:ITEM & 0.05 & 0.01 & 0.05 & [0.02,0.07] & 0.05 & 0.01 & [0.03,0.07] & 1.00\\
LLM:PROMPT:OBS & 0.04 & 0.01 & 0.05 & [0.02,0.06] & 0.04 & 0.01 & [0.03,0.05] & 0.99\\
PROMPT:ITEM:OBS & 0.03 & 0.01 & 0.03 & [0.01,0.03] & 0.02 & 0.01 & [0.01,0.03] & 0.99\\
LLM:PROMPT:ITEM:OBS +  $\epsilon$  & 0.15 & 0.03 & 0.16 & [0.07,0.2] & 0.15 & 0.03 & [0.09,0.19] & 0.99\\
\bottomrule
\end{tabular}}
\end{table*}


\subsection{Generalizability and decision studies}\label{app:gdstudy}

Having estimated variance components, we use the Generalizability Theory decision-study (D-study) framework~\citep{brennan_multifacet_2001-1} to ask a forward-looking question: \emph{how many LLMs and prompting strategies would one need to average over in order to obtain a stable estimate of the shared, persistent error signal?}

\subsubsection{Generalizability Design}

In a generalizability framework, the three prompt families (base, chain-of-thought, pseudo-RAG) are treated as sampled operationalizations from a broader universe of commonly used strategies.  The key claim is not that these specific prompts failed, but that variance attributable to the prompt facet---including its interactions with LLM---is small relative to other sources.  This rests on the full variance decomposition, not on the prompt main effect alone.

We define the \emph{object of measurement} as the item--observation combination $(i,c)$, and the \emph{instrumentation facets} as LLM ($m$) and prompt ($p$).  Averaging over $n_m$ models and $n_p$ prompts, the relative and absolute reliability coefficients are given by the following.

\begin{definition}[Relative reliability]\label{prop:erho}
Let $S$ denote the set of all variance components estimated in Eq.~\ref{eq:vardecomp_full}, and let $S_{\alpha} = \{\alpha \in S
\}$ be the subset containing the object of measurement, $\alpha$.  The expected relative reliability for $n_m$ models and $n_p$ prompts is
\begin{equation}\label{eq:dstudyErho}
  E\widehat{\rho}_\alpha^2 \;=\; \frac{\sigma^2_{\alpha}}{\displaystyle\sum_{k \,\in\, S_{\alpha}} \frac{\sigma^2_k}{n_k} \;+\; \frac{\sigma^2_\varepsilon}{n_m \, n_p}},
\end{equation}
where $n_k$ is the product of the sample sizes of the instrumentation facets appearing in index set $k$ (e.g., for interaction $k = \{i,c,m\}$, $n_k = n_mn_in_c$ where $\alpha = ic$ is the object of measurement and thus $n_i=1$ and $n_c=1$).
\end{definition}

Practically, we can look at the join task-transcript $ic$ object of measurement (as the level of unit for a single score outside of developer control), i.e., $S_{\alpha} = \{\alpha \in S : \{i,c\} \subseteq \alpha\}$.

\begin{definition}[Absolute reliability]\label{prop:phi}
The absolute error variance includes all components. The generalizability coefficient (index of dependability) is
\begin{equation}\label{eq:dstudyPhi}
  \widehat{\Phi}_\alpha \;=\; \frac{\sigma^2_{\alpha}}{\displaystyle\sum_{k \,\in\, S} \frac{\sigma^2_k}{n_k} \;+\; \frac{\sigma^2_\varepsilon}{n_m\, n_p}}.
\end{equation}
\end{definition}

\noindent Both coefficients are computed by sampling directly from the posterior of the variance components (rather than from plug-in point estimates), yielding fully Bayesian credible intervals.

\subsubsection{Signal Detectability}

Figures \ref{fig:error_dstudy} and \ref{fig:error_dstudy_phi}, display $E\widehat{\rho}^2$ and $\widehat{\Phi}$ as functions of $n_m$ and $n_p$, showing rapid saturation: even modest numbers of models and prompts suffice to recover the shared error signal. They represent different aspects of the present study's ability to capture meaningful underlying signal across LLMs for cases in $\hat{e}$. In both when $N_{\text{LLM}} = 16$ and $N_{\text{Prompt Strat.}} = 3$, the reliabilities of  $E\widehat{\rho}^2$ and $\widehat{\Phi}$ are reported in Table \ref{tab:reliab_dstudy}. This suggests that the estimated item-transcript scores in this study achieve approximately the target level of consistency expected for these types of data ($0.65$, see \citealt{ho_reliability_2013}).


\begin{table}
\centering
\caption{\textbf{Decision Study}: performing a similar set of estimations to the LLM variance decomposition, except using the human experts on same test sets. We estimate the coefficients for generalizability and dependability for each of the same decompositions, $\hat{e}$, $\hat{e}^\pm$, and $\hat{e}^\pm_{(+)}$ for two units of measurement}\label{tab:reliab_dstudy}
\resizebox{\ifdim\width>\linewidth\linewidth\else\width\fi}{!}{
\begin{tabular}{llrrr}
\toprule
Metric&  Unit of Measure&$\hat{e}$& $\hat{e}^\pm$& $\hat{e}^\pm_{(+)}$\\
\midrule
$E\hat{\rho}^2$&  $\mathcal{C} \times \mathcal{I}$&0.657& 0.837& 0.819\\
 $\widehat\Phi$& $\mathcal{C} \times \mathcal{I}$& 0.551& 0.790&0.781\\
$E\hat{\rho}^2$&  $\{\mathcal{C} \times \mathcal{I}, \mathcal{C} ,\mathcal{I}\}$&0.819& 0.839& 0.946\\
 $\widehat\Phi$& $\{\mathcal{C} \times \mathcal{I}, \mathcal{C} ,\mathcal{I}\}$& 0.732& 0.809&0.931\\
 \bottomrule
\end{tabular}}
\end{table}

\subsubsection{Interpretation}\label{apx:var_decomp_interp}

These results imply that prompt-induced shifts are largely additive and limited in magnitude rather than transformative.  While unexplored prompts may exist, the observed interaction structure suggests that large corrective effects are improbable unless they represent qualitatively new mechanisms outside the sampled strategy universe.  The inference is thus about \emph{effect-size stability}, not prompt exhaustiveness.  In long-context transcript settings, identifying the exact source of brittleness for any single prompt is a Sisyphean task; the generalizability framework instead supports a probabilistic claim: across a realistic and theory-informed prompt universe, prompt choice is unlikely to produce large alignment corrections relative to the error that is shared across all models.

We note that the shared item--observation error is confounded with measurement error in the ``true'' item--classroom score relative to VAM.  Nonetheless, the preponderance of evidence in the present study indicates that the LLM-correlated component of this error is undesirable.  Future work could incorporate hierarchical rater models~\citep{casabianca_hierarchical_2016}, as in \citet{hardy_all_2024}, to estimate the true score as a latent parameter, or apply noise-control methods such as those illustrated in Appendix~\ref{apx:altmethods}.



\paragraph{Prompt Brittleness} We see distributional brittleness in prompting technique in the form of a skewed, long-tailed posterior distribution (see median and upper HDI bound, Tables \ref{tab:sq_err_decomp}, \ref{tab:app_sq_decomp_full}, and Apdx \ref{apx:vardecomp}). 
Practically, this means that certain prompting techniques can \emph{occasionally} inject large amounts of error with no expected contribution. A dramatic improvement from a prompt change would not be expected to generalize to new situations.






\begin{figure}[h!]
    \centering
    \includegraphics[width=1\linewidth]{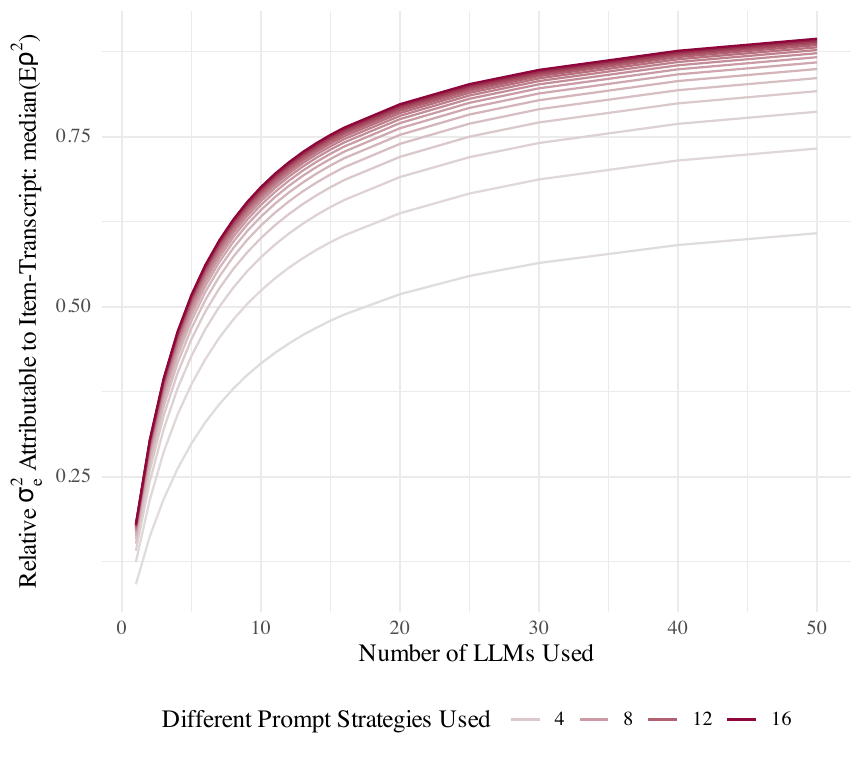}
    \caption{\textbf{Reliability of Relative Shared Error Signal for \textsc{Item}$\times$\textsc{Obs}}: Bayesian Decision Study for item-transcript scores as the object of study across varying numbers of LLMs and Prompting Techniques required to remove LLM and prompt-specific idiosyncrasies from the error signal. Each value is calculated directly from sampling the posterior and extracting the median. For this study, where $N_{\text{LLM}} = 16$ and $N_{\text{Prompt Strat.}} = 3$, the reliability $E\widehat{\rho}^2_{ic}$ is $0.66\pm0.02$ ($\hat{R}=1.00$). This suggests that the estimated item-transcript scores in this study achieve approximately the target level of consistency expected for these types of data ($0.65$, see \citealt{ho_reliability_2013}).}
    \label{fig:error_dstudy}
\end{figure}

\begin{figure}[h!]
    \centering
    \includegraphics[width=1\linewidth]{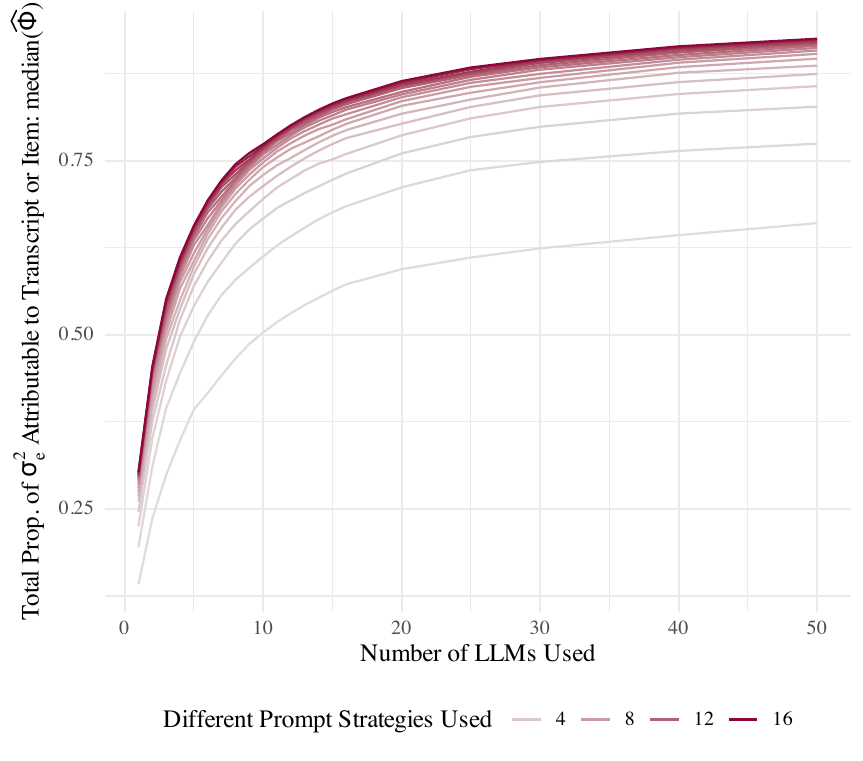}
    \caption{\textbf{Reliability of Absolute Shared Error Signal}: Bayesian Decision Study for any of item, transcript, or their interaction as the objects of measurement ($\sigma^2_{\alpha} =\sigma^2_{i}+\sigma^2_{c}+\sigma^2_{ci}$) across varying numbers of LLMs and Prompting Techniques required to remove LLM and prompt-specific idiosyncrasies from the error signal. Each value is calculated directly from sampling the posterior and extracting the median. For this study, where $N_{\text{LLM}} = 16$ and $N_{\text{Prompt Strat.}} = 3$, the reliability $\widehat{\Phi}$, Equation \ref{eq:dstudyPhi}, of the mean rating across models and prompts shows is 0.73 ($\hat{R}=1.00$).}
    \label{fig:error_dstudy_phi}
\end{figure}

\subsection{Supplementary theory: identifiability and interpretation}\label{apx:theory_vardecomp}

\begin{proposition}[Four-way interaction non-identifiability]\label{prop:nonident}
In a fully crossed $I\times C\times M\times P$ design with one observation per cell, the variance of the four-way interaction term $\nu_{icmp}$ is not separately identifiable from the residual variance.
\end{proposition}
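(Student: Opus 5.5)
The plan is to show that the observed data's distribution depends on $\sigma^2_{icmp}$ and $\sigma^2_\varepsilon$ only through their sum. Two parameter vectors that agree on every other component and on $\sigma^2_{icmp}+\sigma^2_\varepsilon$ will then induce identical likelihoods, so the two variances cannot be separately identified. Since the model in Eq.~\ref{eq:vardecomp_full} is Gaussian with mean $\mu\mathbf{1}$, the law of the observation vector $\hat{e}=(\hat{e}_{cimp})_{(c,i,m,p)\in\mathcal{D}}$ is determined entirely by $\mu$ and the covariance matrix $\Sigma=\operatorname{Cov}(\hat{e})$. It therefore suffices to compute $\Sigma$.

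The first step is to write $\Sigma$ as a sum of Kronecker-structured matrices. Index the facets by $f\in\{c,i,m,p\}$ and let $J_f$ be the all-ones matrix and $I_f$ the identity, each of size $N_f$. Independence of the effects gives
\[
\Sigma=\sum_{\emptyset\neq\alpha\subseteq\{c,i,m,p\}}\sigma^2_\alpha\, K_\alpha+\sigma^2_\varepsilon\, I,
\qquad K_\alpha=\bigotimes_{f} \bigl(I_f \text{ if } f\in\alpha,\ J_f \text{ otherwise}\bigr).
\]
The justification is that $\operatorname{Cov}(\nu_\alpha(\cdot),\nu_\alpha(\cdot))$ equals $\sigma^2_\alpha$ exactly when the two cells agree on the coordinates in $\alpha$. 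The crucial observation concerns the full four-way index set $\alpha=\{c,i,m,p\}$: there $K_\alpha=I_c\otimes I_i\otimes I_m\otimes I_p=I$. This is precisely where the assumption of one observation per cell enters, because with replication the residual would be indexed by an additional replicate facet $r$, and $K_{cimp}$ would be $I\otimes J_r\neq I$. Consequently $\sigma^2_{icmp}$ and $\sigma^2_\varepsilon$ multiply the same matrix, and $\Sigma$ depends on them only through $\sigma^2_{icmp}+\sigma^2_\varepsilon$.

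Next, for any $t$ with $-\sigma^2_{icmp}<t<\sigma^2_\varepsilon$, the parameters $(\sigma^2_{icmp}+t,\ \sigma^2_\varepsilon-t)$ yield the same $\Sigma$. They therefore yield the same Gaussian likelihood, which gives a continuum of observationally equivalent parameters and establishes the claim. For completeness I would remark that the remaining components are identifiable when every $N_f\ge2$. The $K_\alpha$ span the commutative algebra with the orthogonal idempotent basis $\bigotimes_f (P_f \text{ or } Q_f)$, where $P_f=J_f/N_f$ and $Q_f=I_f-P_f$. The change of basis is triangular with nonzero diagonal, so the 15 ``expected mean squares'' determine the 15 coefficients, with the top one being $\sigma^2_{icmp}+\sigma^2_\varepsilon$. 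This is the standard ANOVA/EMS argument.

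The main obstacle is mostly one of framing rather than calculation. The paper's $\hat{e}$ is a squared difference and not genuinely Gaussian, so I would state that identifiability is assessed within the specified model, meaning the Gaussian random-effects likelihood, or more generally under any model whose second moments are as specified. I would also note that in the Bayesian fit the priors make the posterior proper but cannot separate the two components; the data inform only their sum. This is why the paper reports $\sigma^2_{cimp+\epsilon}$.
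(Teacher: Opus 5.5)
Your proof is correct and follows the same idea as the paper's sketch. With one observation per cell, the four-way effect and the residual are both indexed by the full cell, so, as your computation $K_{cimp}=I$ makes explicit, they enter the Gaussian likelihood only through $\sigma^2_{icmp}+\sigma^2_\varepsilon$. Your added remark on identifiability of the remaining components when every $N_f\ge 2$ goes beyond what the paper claims, but it is correct and a useful complement.
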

\noindent\textbf{Proof sketch.}
With one observation per cell, any cell-specific deviation from the sum of lower-order effects can be equivalently represented as either a four-way interaction draw or residual noise. The likelihood depends only on their sum, implying non-identifiability. \hfill $\square$

\begin{proposition}[Controllable-variance upper bound]\label{prop:controlbound}
Let $\mathcal{S}_{\mathrm{dev}}:=\{k\in\mathcal{S}: m\in k \ \text{or}\ p\in k\}$ be the set of components involving developer-controlled facets (\textsc{LLM} or \textsc{Prompt}). Then the maximum fraction of error variance that can be affected by changing model and/or prompt (holding the object facets fixed) is upper bounded by
\[
\Pi_{\mathrm{dev}} := \sum_{k\in\mathcal{S}_{\mathrm{dev}}}\pi_k.
\]
\end{proposition}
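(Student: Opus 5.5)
We work directly from the random-effects decomposition in Eq.~\ref{eq:apx_fullmodel_rewrite}, with the four-way interaction absorbed into the residual as licensed by Proposition~\ref{prop:nonident}. The idea is to split $\hat{e}_{cimp}$ into two parts. One depends only on the object facets $(c,i)$; the other collects everything indexed by $m$ or $p$. Then we show that any intervention on the model or prompt can only move the second part.

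\textbf{Step 1 (partition).} Write
\[
\hat{e}_{cimp}=\mu+A_{ci}+B_{cimp},
\]
where
\[
A_{ci}:=\nu_c+\nu_i+\nu_{ci}, \qquad B_{cimp}:=\sum_{k\in\mathcal{S}_{\mathrm{dev}}}\nu_k ,
\]
and the residual $\eta_{cimp}$ is included in $B$, since its index set contains $m$ and $p$. Every component in $\mathcal{S}$ lies in exactly one of these two groups. By mutual independence of the effects,
\[
\sigma^2_{\mathrm{total}}=\operatorname{Var}(A)+\operatorname{Var}(B), \qquad \operatorname{Var}(A)=\sigma^2_c+\sigma^2_i+\sigma^2_{ci}, \qquad \operatorname{Var}(B)=\sum_{k\in\mathcal{S}_{\mathrm{dev}}}\sigma^2_k .
\]

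\textbf{Step 2 (invariance under developer choices).} Formalize ``changing model and/or prompt holding object facets fixed'' as replacing $(m,p)$ by any $(m',p')$, possibly chosen adaptively as a function of $(c,i)$. Under this replacement $A_{ci}$ is unchanged, so the achieved error is $\mu+A_{ci}+B_{cim'p'}$.

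For any such policy, the part of the error variance that can be altered is at most the variance attributable to $B$. Concretely, the best case is a choice that drives $B$ to a constant, and then the residual variance is $\operatorname{Var}(A)$. Hence the reducible fraction is at most
\[
1-\frac{\operatorname{Var}(A)}{\sigma^2_{\mathrm{total}}}=\sum_{k\in\mathcal{S}_{\mathrm{dev}}}\pi_k=\Pi_{\mathrm{dev}} .
\]

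\textbf{Step 3 (the main obstacle).} The delicate point is that an adaptive choice of $(m',p')$ depending on $(c,i)$ could correlate $B$ with $A$. Such correlation might seem to reduce the error variance below $\operatorname{Var}(A)$ by cancelling it.

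To handle this, I would first make precise that ``affected'' means variance in $\hat{e}$ that is explained by the $(m,p)$-indexed components. Equivalently, it is $\mathbb{E}\operatorname{Var}(\hat{e}\mid c,i)$ plus the $(m,p)$-dependent parts, measured under the fitted model where the effects are independent of the choice rule. Under that variance-accounting interpretation, with independent draws and a choice rule that is measurable with respect to the facet labels rather than the realized effects, the law-of-total-variance identity in Step 1 gives the bound directly.

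I would state this as the standing assumption of the proposition. I would also remark that the bound is attained only when developer choices eliminate all $m$- and $p$-indexed variation. Finally, the bound is conservative in the way the paper intends, because $\eta$ includes irreducible noise $\sigma^2_\epsilon$, which Proposition~\ref{prop:nonident} prevents us from separating out.
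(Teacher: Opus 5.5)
Your argument is correct and matches the reasoning the paper leaves implicit. The paper gives no proof, only the remark that $\Pi_{\mathrm{dev}}$ is an optimistic ceiling, and that remark rests on your split into the $(m,p)$-free part $\nu_c+\nu_i+\nu_{ci}$ (invariant under changing model or prompt) and everything else, with shares adding by independence. Your Step~3 restriction to choice rules that are independent of the realized object-facet effects is a genuine condition the paper never states: an oracle that picks the best $(m,p)$ for each item--segment could push the error variance below $\sigma^2_c+\sigma^2_i+\sigma^2_{ci}$, so keeping it as a standing assumption is the right call.
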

\noindent\textbf{Interpretation.}
$\Pi_{\mathrm{dev}}$ is an \emph{optimistic} ceiling: it counts as “controllable” even those interactions (e.g., \textsc{LLM}$\times$\textsc{Obs}) whose direction may not generalize across new classroom segments. A small $\Pi_{\mathrm{dev}}$ therefore implies that prompt/model changes cannot be expected to deliver large, reliable alignment improvements.

\begin{corollary}[When prompt engineering is predictably effective]\label{cor:prompt}
If $\pi_{p}$ and $\sum_{k: p\in k}\pi_k$ are both large and have concentrated posteriors (narrow HDIs), then prompt changes can be expected to yield reliable error reductions across contexts; conversely, small medians with wide HDIs indicate brittle prompt behavior.
\end{corollary}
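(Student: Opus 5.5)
The plan is to read Corollary~\ref{cor:prompt} as a statement about the random-effects model in Eq.~\eqref{eq:apx_fullmodel_rewrite}, and to derive both directions from Proposition~\ref{prop:controlbound} together with a conditional-variance calculation. First, fix an object-of-measurement cell $(i,c)$ and an LLM $m$. Then consider the change in expected error when the prompt is switched from $p$ to $p'$:
\[
\Delta_{icm}(p,p') \;=\; \hat{e}_{icmp}-\hat{e}_{icmp'} .
\]
By linearity of the decomposition, this difference involves only the components whose index set contains $p$. These are $\nu_p$, $\nu_{ip}$, $\nu_{cp}$, $\nu_{mp}$, $\nu_{icp}$, $\nu_{imp}$ and $\nu_{cmp}$, together with the confounded remainder $\eta$. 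Every component not involving $p$ cancels. By mutual independence,
\[
\operatorname{Var}\Delta \;=\; 2\sum_{k:\,p\in k}\sigma^2_k \;+\; 2\sigma^2_\eta ,
\]
so the prompt-attributable share $\sum_{k:p\in k}\pi_k$ is exactly the sub-ceiling of $\Pi_{\mathrm{dev}}$ from Proposition~\ref{prop:controlbound} restricted to the prompt facet.

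Next I split $\operatorname{Var}\Delta$ into a \emph{reliable} part and a \emph{context-specific} part. The reliable part is the prompt main effect $\sigma^2_p$: its realized value $\nu_p-\nu_{p'}$ is common to every item, transcript and model. The context-specific part consists of the interactions with $c$ or $i$, plus $\eta$. The expected reduction that transfers to a new transcript $c^\ast$ is governed by $\nu_p-\nu_{p'}$ (and by $\nu_{mp}$ for a fixed model). The spread of the realized change across contexts is governed by the interaction components. So ``reliable error reduction across contexts'' corresponds to a signal-to-noise ratio of the form
\[
\frac{\sigma^2_p}{\sum_{k:\,p\in k,\,k\cap\{i,c\}\neq\emptyset}\sigma^2_k+\sigma^2_\eta}.
\]
This ratio is the prompt analogue of Definition~\ref{prop:erho}. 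It is large precisely when $\pi_p$ is large relative to the total prompt-involving share, which is the first hypothesis of the corollary. When $\pi_p$ is small, any realized gain is dominated by the interaction terms and does not generalize.

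The second hypothesis concerns posterior concentration. Here I would argue at the Bayesian level: the statement is about posterior predictive behaviour, so the shares $\pi_k$ are integrated over their posterior. If the HDI of $\pi_p$ is narrow and bounded away from zero, the posterior predictive distribution of $\nu_p-\nu_{p'}$ has stable non-negligible scale, so a reduction observed on the sample is predicted to recur. Conversely, a small median with a wide, right-skewed HDI (as in Table~\ref{tab:sq_err_decomp}) produces a predictive distribution that is a mixture. Most of its mass lies near zero, with a heavy tail. Draws from such a distribution are typically negligible but occasionally large, which is the operational definition of brittleness used in Appendix~\ref{apx:var_decomp_interp}.

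The main obstacle is that ``reliable'' and ``brittle'' are not defined formally, so the corollary is partly definitional. I would first try to make it precise as a monotonicity claim. Holding the total prompt-involving variance fixed, the probability that the sign of $\Delta$ at a new context matches its sign in the sample increases in $\sigma^2_p/\sum_{k:p\in k}\sigma^2_k$. Under Gaussianity this is a bivariate-normal orthant probability, which is monotone in the correlation. That correlation is exactly this ratio.

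A further subtlety is that squared error makes $\hat{e}$ non-Gaussian, so the normal random effects are only a working model. I would state the result conditional on that model, consistent with Proposition~\ref{prop:nonident}, which already treats the four-way term and $\eta$ jointly.
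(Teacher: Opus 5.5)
The paper gives no proof of Corollary~\ref{cor:prompt}. It is asserted right after Proposition~\ref{prop:controlbound} as an interpretive consequence, and ``brittle'' is explained only informally: a long-tailed posterior means prompts ``occasionally inject large amounts of error.'' Your route is therefore genuinely different and more substantive. The prompt-switch contrast $\Delta$, the split of $\operatorname{Var}\Delta$ into the context-invariant main effect and the context-specific interactions, and the orthant probability $\tfrac12+\tfrac1\pi\arcsin\rho$ for sign replication give ``reliable'' a checkable meaning that the paper never supplies. The variance computation and the orthant step are correct.

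One step, however, does not connect to the corollary as stated. You say your ratio is large ``precisely when $\pi_p$ is large relative to the total prompt-involving share, which is the first hypothesis.'' The hypothesis is instead that $\pi_p$ and $\sum_{k:p\in k}\pi_k$ are both large. Your own formula shows that raising $\sum_{k:p\in k}\pi_k$ at fixed $\pi_p$ lowers reliability: with $\pi_p=0.3$ and $\sum_{k:p\in k}\pi_k=0.9$, a context sharing no item, transcript, or model has $\rho\le 1/3$. The missing bridge is the normalization $\sum_{k\in\mathcal S}\pi_k=1$. Whatever facets the new context shares, the correlation's numerator contains $\sigma^2_p$ and its denominator is a sub-sum of $\sigma^2_{\mathrm{total}}$, so
\[
\rho\;\ge\;\frac{\sigma^2_p}{\sigma^2_{\mathrm{total}}}\;=\;\pi_p .
\]
Hence a large $\pi_p$ alone forces $\rho$ near $1$. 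The clause on $\sum_{k:p\in k}\pi_k$ is then automatic (it is $\ge\pi_p$) and does no work; state this explicitly rather than folding it into a paraphrase of the hypothesis.

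Relatedly, having fixed $m$, the shared part of $\Delta$ also includes $\nu_{mp}-\nu_{mp'}$. So your correlation equals $\sigma^2_p/(\sum_{k:p\in k}\sigma^2_k+\sigma^2_\eta)$ only for a context that also changes the model. Its denominator must also carry $\sigma^2_\eta$, as in your first ratio; your monotonicity ratio drops it.

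Posterior concentration should enter through $\mathrm{E}[\tfrac12+\tfrac1\pi\arcsin\rho\mid\text{data}]$, not through the predictive scale of $\nu_p-\nu_{p'}$. A sizable main effect says nothing about recurrence if the interactions are larger.

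For the converse, the fully-new-context correlation satisfies $\rho\le\pi_p/\pi_\eta$. With the non-negligible remainder observed here, a posterior piled near $\pi_p=0$ makes replication close to a coin flip. Your heavy-tailed mixture does capture the paper's ``occasionally large'' reading. Note, though, that it reflects epistemic uncertainty about $\sigma^2_p$, which is largely unavoidable with $N_P=3$ prompts, rather than an observed frequency of large prompt effects.
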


\subsection{Parallel estimation on human expert error}\label{apx:humanvar}
 
To aid readers less familiar with interpreting variance decompositions, we provide a parallel analysis that replaces the \textsc{LLM}$\times$\textsc{Prompt} instrumentation facets with individual human rater identifiers.  This is a \emph{different} estimation from Eq.~\ref{eq:vardecomp_full}---the facets, and therefore what counts as ``controllable'' versus ``systemic,'' change accordingly.  The results (Table~\ref{tab:hum_err_decomp}) are not directly comparable in absolute terms, but the \emph{proportional} distribution of variance is instructive.
 
\subsubsection{Expected pattern for expert raters}
 
If human raters are performing the rating task competently, and if items vary in their contribution to VAM~\citep{kane_have_2013, hardy_all_2025}, then most observed human error should concentrate on the item facet and item--observation interaction.  This is precisely what we find.
 
\subsubsection{Results}
 
The rater main effect and the observation main effect each account for 0\% of variance in squared error---individual expert biases are negligible in the absence of interactions.  For unsigned error, $\hat{e}$ he item main effect absorbs 33.8\%, and the item--observation interaction an additional 25.2\%, for a combined 59.0\% attributable to the rubric dimension and its interplay with the specific lesson segment.  This is the region where ``true score'' mismatch between expert ratings and VAM is most expected~\citep{kane_have_2013}.  Rater-specific biases on particular items or lesson segments contribute a combined 17.3\%.

This finding becomes much more pronounced when sign is included in the error signal $\hat{e}^\pm$, and $\hat{e}^\pm_{(+)}$. Here, the combined variance in error attributable to the item alone increases to 69.3\%. If the items themselves are misaligned, we would expect humans to be highly correlated and sensitive to the addition of such items. We also see this dramatic variability in the LLMs with the addition of the three additional items.
 
\subsubsection{Contrast with LLM error structure}
 
For LLMs, the analogous item-plus-item$\times$observation share is only 15\% of total error variance.  The error structure of expert humans is thus concentrated where theory predicts it should be---on the constructs being measured---whereas LLM error disperses across high-order interactions involving model and prompt, indicating instability rather than construct-related variation. Proportions absorbed by the residuals are similar between LLMs and humans, so the variation in shares is comparable.

\begin{table}
\centering
\caption{\textbf{Human Error Variance Decomposition}: performing a similar set of estimations to the LLM variance decomposition, except using the human experts on same test sets. We estimate the percentage of human expert-VAM error for each of the same decompositions, $\hat{e}$, $\hat{e}^\pm$, and $\hat{e}^\pm_{(+)}$ using Restricted Maximum  Likelihood with \texttt{lmer} \citep{bates_fitting_2015}.}\label{tab:hum_err_decomp}
\resizebox{\ifdim\width>\linewidth\linewidth\else\width\fi}{!}{
\begin{tabular}{lrrr}
\toprule
Facet & Pct($\hat{e}$) & Pct$(\hat{e}^\pm)$& Pct$(\hat{e}^\pm_{(+)})$\\
\midrule
ITEM & 33.76 & 63.75 & 69.27\\
OBS & 0.00 & 0.00 & 1.10\\
RATERID & 0.00 & 0.00 & 0.00\\
\addlinespace
ITEM:OBS  & 25.22 & 13.73 & 11.84\\
RATERID:OBS & 12.85 & 8.01 & 3.69\\
RATERID:ITEM & 4.42 & 3.27 & 3.35\\
RATERID:ITEM:OBS + $\epsilon$ & 23.76 & 11.24 & 10.75\\
\bottomrule
\end{tabular}}
\end{table}

The code for the estimations of the models can be found in code Listing \ref{listing:human}
\begin{lstlisting}[language=R] 
lme4::lmer(errSCORE ~ 
         (1|ITEM) 
       + (1|OBS_CHAP)
       + (1|RATERID)
       + (1|ITEM:OBS_CHAP)
       + (1|RATERID:ITEM)
       + (1|RATERID:OBS_CHAP), 
       data = df, 
control=lmerControl(optimizer="bobyqa"))
\end{lstlisting}\label{listing:human}

\section{ Measures and Metrics}\label{apx:metrics}



\subsection{Measuring Dependence with Distance Correlation}\label{app:dcor}

In this appendix, we provide a self-contained technical overview of the bias-corrected squared distance correlation, the primary statistic used in our analysis to quantify the dependence between raters. We first motivate the need for a measure beyond traditional linear correlation and then formally define the statistic and its properties.

\subsubsection{Motivation: Beyond Linear Correlation}
A common method for measuring the association between two random variables, $U$ and $V$, is the Pearson product-moment correlation coefficient, $\rho(U,V)$. While widely understood, Pearson's $\rho$ is designed to detect only \textit{linear} relationships. It can be zero even when the variables share a strong, deterministic, but nonlinear relationship (e.g., $V = U^2$ for a symmetric $U$).

In the context of this study, we are comparing rating distributions from different sources (LLM-LLM, LLM-human). There is no \textit{a priori} reason to assume that the relationship between two sets of ratings is linear. For example:
\begin{itemize}[nosep]
    \item One model might use a compressed range of scores compared to another, leading to a curvilinear relationship.
    \item Two raters might agree on clear-cut cases (very high or very low quality instruction) but diverge in their assessment of moderately effective instruction, producing a non-monotonic relationship.
\end{itemize}
Relying on a linear measure in such cases would systematically underestimate the true degree of behavioral correspondence between the raters.

To overcome this limitation, we employ the \textbf{distance correlation}, a non-parametric measure of dependence introduced by \citet{szekely_measuring_2007}. Distance correlation is designed to capture any type of statistical dependence between two random vectors of arbitrary and not necessarily equal dimension. Its central property, which makes it exceptionally powerful, is that the population distance correlation is zero \textit{if and only if} the variables are statistically independent.

\subsubsection{The Bias-Corrected Squared Distance Correlation}
We now formally define the squared distance correlation and its bias-corrected sample estimator, which we denote $\operatorname{dCor}^2_n$. We focus on the squared value, as it is more convenient for statistical inference and bias correction.

Let $(X, Y) = \{(x_k, y_k) : k=1, \dots, n\}$ be a statistical sample of $n$ paired observations from a joint distribution $(U,V)$. In our case, $x_k$ and $y_k$ represent the scalar ratings given by two different raters to the $k$-th classroom transcript.

\paragraph{Sample Distance Matrices}
The computation begins by constructing Euclidean distance matrices for each variable. Let the $n \times n$ distance matrix for the $X$ sample be $\mathbf{a}$, with entries:
\begin{equation}
    a_{kl} = \|x_k - x_l\| = |x_k - x_l|,
\end{equation}
and similarly for the $Y$ sample, $\mathbf{b}$, with entries $b_{kl} = |y_k - y_l|$. The use of absolute differences is a specific case of the Euclidean norm for scalar ratings.

\paragraph{Double Centering and Sample Distance Covariance}
To make the measure invariant to rigid transformations (translation and orthogonal rotation), the distance matrices are \textit{double-centered}. For each element $a_{kl}$ in the distance matrix $\mathbf{a}$, we compute its centered counterpart $A_{kl}$:
\begin{equation}
    A_{kl} = a_{kl} - \bar{a}_{k\cdot} - \bar{a}_{\cdot l} + \bar{a}_{\cdot \cdot},
\end{equation}
where $\bar{a}_{k\cdot} = \frac{1}{n}\sum_{l=1}^n a_{kl}$ is the $k$-th row mean, $\bar{a}_{\cdot l} = \frac{1}{n}\sum_{k=1}^n a_{kl}$ is the $l$-th column mean, and $\bar{a}_{\cdot \cdot} = \frac{1}{n^2}\sum_{k,l=1}^n a_{kl}$ is the grand mean of the distance matrix. The same procedure is applied to $\mathbf{b}$ to obtain the centered matrix $\mathbf{B}$.

The \textit{sample squared distance covariance}, $\operatorname{dCov}^2_n(X,Y)$, is the arithmetic average of the products of the corresponding centered distances:
\begin{equation}
    \operatorname{dCov}^2_n(X,Y) = \frac{1}{n^2} \sum_{k,l=1}^n A_{kl} B_{kl}.
\end{equation}
The sample squared distance variances are the distance covariances of each variable with itself: $\operatorname{dCov}^2_n(X,X)$ and $\operatorname{dCov}^2_n(Y,Y)$.

\subsubsection{The Bias-Corrected Estimator}
The natural (but biased) sample estimator for the squared distance correlation is the ratio of the sample squared distance covariance to the product of the sample distance standard deviations. However, this estimator is positively biased for finite samples; it will be non-zero on average even for independent variables. This bias complicates inference, especially with moderate sample sizes.

To address this, \citet{szekely_partial_2014} introduced a bias-corrected estimator, which we denote $\operatorname{dCor}^2_n(X,Y)$. It is based on a U-statistic estimator of the squared distance covariance, which is unbiased. The computation involves a slightly different centering:
\begin{align}
    \tilde{A}_{kl} &= a_{kl} - \frac{1}{n-2}\sum_{j=1}^n a_{kj} - \frac{1}{n-2}\sum_{i=1}^n a_{il} \\ \nonumber &+ \frac{1}{(n-1)(n-2)}\sum_{i,j=1}^n a_{ij}.
\end{align}
The unbiased estimator of squared distance covariance is then:
\begin{equation}
\label{eq:unbiased_dcov}
    \widetilde{\operatorname{dCov}}^2_n(X,Y) = \frac{1}{n(n-3)}\sum_{k \neq l} \tilde{A}_{kl} \tilde{B}_{kl}.
\end{equation}
The bias-corrected squared distance correlation, $\operatorname{dCor}^2_n$, is constructed as the ratio of these unbiased estimators:
\begin{equation}
    \operatorname{dCor}^2_n(X,Y) = \frac{\widetilde{\operatorname{dCov}}^2_n(X,Y)}{\sqrt{\widetilde{\operatorname{dCov}}^2_n(X,X) \widetilde{\operatorname{dCov}}^2_n(Y,Y)}}.
\end{equation}
Due to the bias correction, $\operatorname{dCor}^2_n$ can take small negative values when the true dependence is zero or near-zero. In practice, negative estimates are typically treated as evidence of independence and can be reported as zero. The statistic is bounded above by 1.

\subsubsection{Key Properties}
The power of distance correlation is summarized in the following from \citet{szekely_measuring_2007}:
Let $U$ and $V$ be random variables with finite first moments. The population distance correlation $\operatorname{dCor}(U,V) = 0$ if and only if $U$ and $V$ are statistically independent.
This property guarantees that distance correlation will detect any departure from independence, linear or not. In contrast, Pearson correlation's analogous property holds only for bivariate normal distributions. By using the bias-corrected estimator $\operatorname{dCor}^2_n$, we obtain a reliable and sensitive measure of the dependence between rater judgments, robust to the specific functional form of their relationship and suitable for the sample sizes in our study. This makes it the ideal tool for uncovering the subtle but strong patterns of behavioral convergence among LLMs documented in Section \ref{sec:discdcor}.

\begin{figure*}[htbp]
    \centering
    \includegraphics[width=1\linewidth]{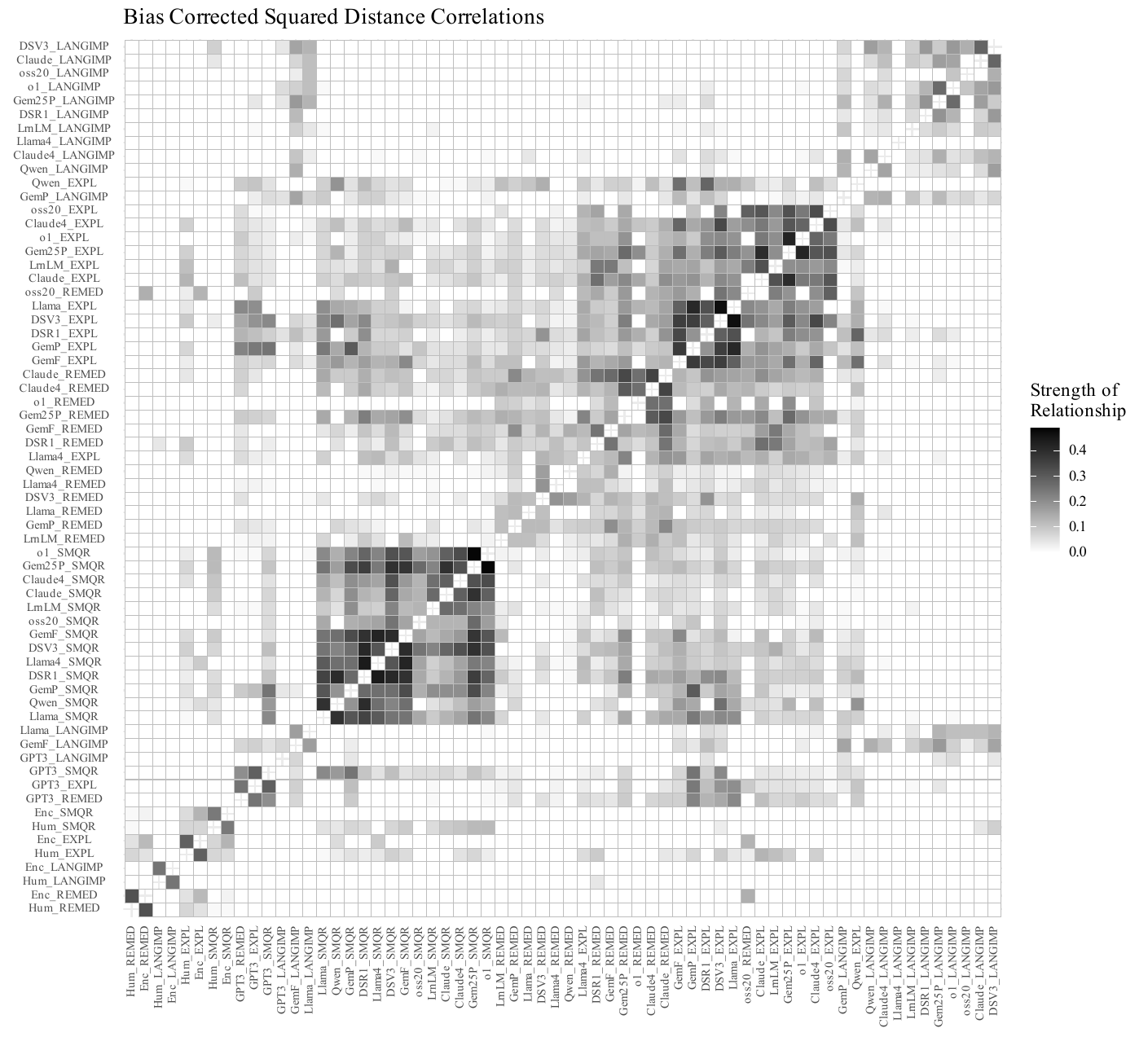}
    
    \caption{\textbf{Between and Within LLM Distance Correlations: MQI}, distance correlations nonparametric measure of dependence between and within rater families across MQI items. Correlation are conducted at the item-transcript level using pairwise-complete observations. Nonsignificant relationships (at $\alpha < 0.05$) are shown as blank after adjusting for family-wise error rate using the Bonferroni correction. Hierarchical clustering is done using complete linkages.}
    \label{fig:distance}
\end{figure*}

\begin{figure*}[htbp]
    \centering
    \includegraphics[width=1\linewidth]{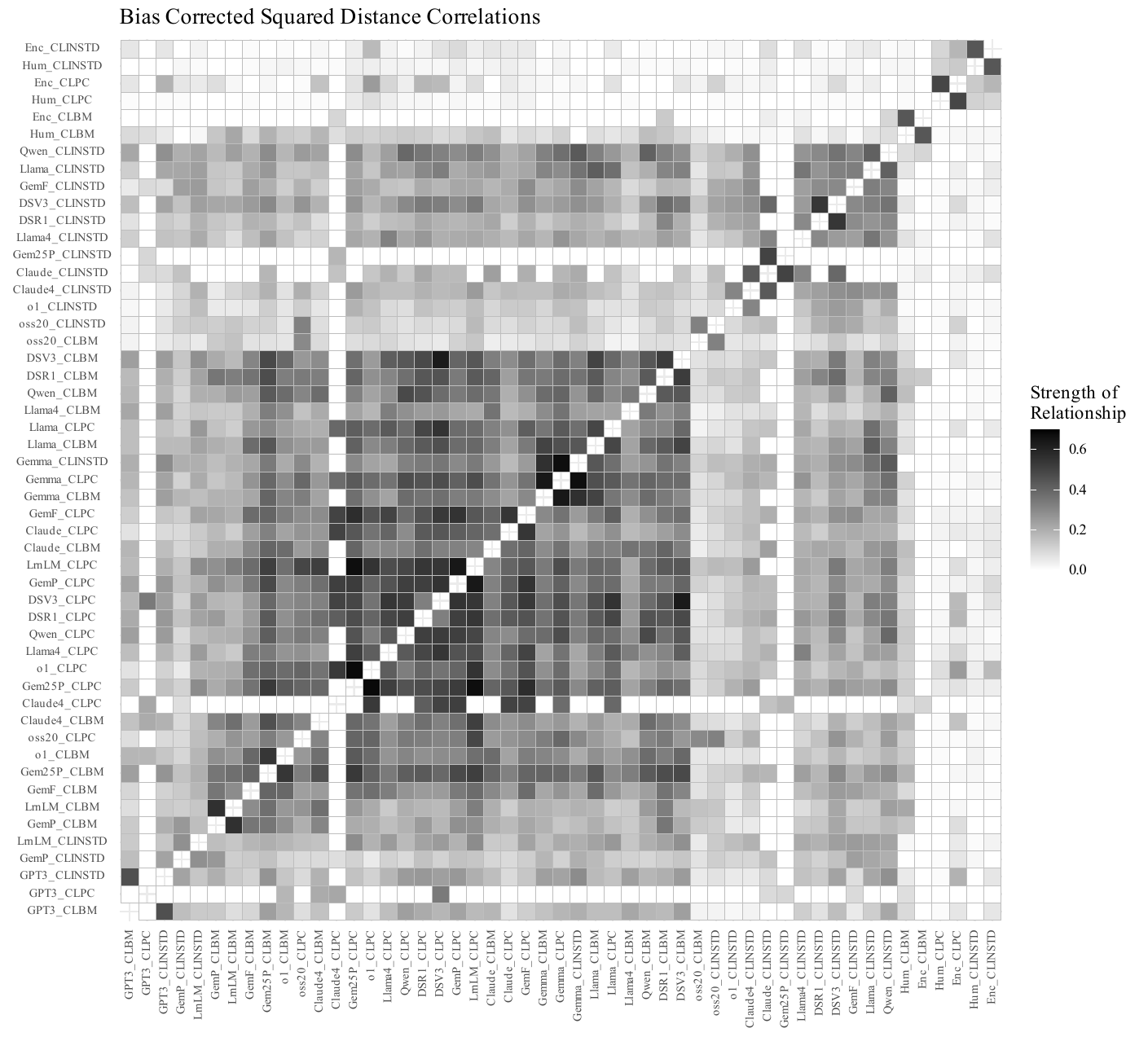}
    
    \caption{\textbf{Between and Within LLM Distance Correlations: CLASS}, distance correlations nonparametric measure of dependence between and within rater families across CLASS items. Correlation are conducted at the item-transcript level using pairwise-complete observations. Nonsignificant relationships (at $\alpha < 0.05$) are shown as blank after adjusting for family-wise error rate using the Bonferroni correction. Hierarchical clustering is done using complete linkages.}
    \label{fig:distance_class}
\end{figure*}

\subsection{Disattenuated Stacked Correlations for Underlying VAM}\label{apx:disatt}
By stacking both VAM measures (STA and ALT in Eq. \ref{eq:taucorrect}) \citep{kane_estimating_2008,kane_gathering_2012} at the teacher-year level, we can estimate the correlation of the underlying value-added to student learning through these two standardized measures. The correlation between the two VAMs is reduced by the fact that there is measurement error in both estimates for a given teacher-year.  

We can use the noise-ceiling, the geometric mean of the product of implied reliabilities with the underlying value add, to disattenuate the correlation from some of the measurement error. This is because the correlation between either measure and the underlying value-add is the square root of the correlation between the two noisy measures \cite{kane_gathering_2012}.

\begin{equation}\label{eq:taucorrect}
    \tau_{S_{j},{Y}} = \frac{\tau_{S_j\tilde{Y}}}{\sqrt{\tau_{\tilde{Y}_{{STA}},\tilde{Y}_{{ALT}}}}}
\end{equation}

We use this relationship when measuring the correlations between classroom ratings and the underlying teacher value-add on student learning. This scalar transform for the y-axis has no effect on positions (only changing the y-axis tick marks) in Figure \ref{fig:alignment}. The findings of the study are robust to (and would be strengthened by) additionally utilizing Greiner's equality, $\rho =\sin{\left(\frac\pi 2 E[\tau]\right)}$ \citep{greiner_ueber_1909} when performing this transformation. We report the results without this transformation both for simplicity and to better preserve the alignment nature of $\tau$.

\subsection{Expert Ensembling}\label{sec:ensemble-failure}

Conventional wisdom suggests that ensembling multiple models improves robustness and accuracy by leveraging diverse model strengths or averaging out independent errors. Our findings directly challenge this assumption for educational evaluation tasks. We tested two conceptually opposed ensemble strategies: (1) \textbf{pedagogy-expertise weighting}, which weights model votes by performance on an AI pedagogy benchmark \citep{lelievre_benchmarking_2025},\footnote{Findings are robust to using other expertise benchmark scores such as MMLU-Pro and math-specific benchmarks} and (2) \textbf{unanimous voting}, which selects only cases where all models agree, distilling their shared consensus.

For a set of models $\mathcal{F} = \{f_1, \ldots, f_K\}$, the pedagogy-weighted ensemble score for transcript $t$ on task $j$ is:
\begin{equation}
S_{\text{weighted}, tj} = \frac{\sum_{k=1}^K w_k \cdot S_{f_k, tj}}{\sum_{k=1}^K w_k}
\end{equation}
where $w_k$ represents model $f_k$'s performance on a pedagogy knowledge benchmark (we test MMLU Education subset, MMLU Mathematics, and a specialized mathematics pedagogy benchmark). The unanimous ensemble restricts analysis to the subset $\mathcal{T}_{\text{unan}} = \{t : S_{f_1,tj} = S_{f_2,tj} = \cdots = S_{f_K,tj}\}$ of transcripts where all models assign identical ratings.

Figure \ref{fig:alignment} (middle and bottom rows) reveals that \textbf{both ensemble strategies not only fail to improve alignment with student learning but dramatically worsen it} for several critical instructional dimensions. For \texttt{REMED} (remediation of student errors), pedagogy-weighted ensembles (middle row) shift the alignment distribution downward: median $\tau_{S_{\text{weighted}}Y}$ decreases from approximately $-0.15$ (individual models) to $-0.28$ (weighted ensemble), a statistically significant degradation ($p < 0.01$, bootstrap test). Similarly, for \texttt{CLBM} (behavior management), unanimous voting ensembles (bottom row) shift dramatically lower $\tau_{S_{\text{unan}}Y} < -0.2$, compared to the more dispersed individual model distribution.

\section{Alternate Methods for Downstream Task Alignment: BLUPs}\label{apx:altmethods}
\subsection{Teacher Skill Model Best Linear Unbiased Predictions (BLUPs)}\label{apx:rater}

A teacher skill model described allows us to capture the nested structure of the data and account for various sources of random variation, including teacher effects, rater bias, lesson-specific factors, and skill-specific effects. By modeling these sources of variance as random effects, we obtain more accurate estimates of the underlying teacher skill. The strength of this approach is that it helps isolate the parts of the variation attributable to specific teacher skill targeted by the rubric item, which is particularly important where there may be correlations across skills.  

The detailed model specification is given by:

\begin{align}
\text{S}_{rjsli} &=  \mu + 
\nu_{i} + 
\nu_{r} + 
\nu_{j} + 
\nu_{l:i} + 
\nu_{s:l:i} + \nonumber  \\ &
\nu_{ji} + 
\nu_{jl:i} + 
\nu_{js:l:i} + 
\nu_{ri} 
+ \nu_{rl:i} 
+ \nu_{rs:l:i} \nonumber  \\ &
+ \nu_{rji} 
+ \nu_{rjl:i} 
+ \nu_{rj}
+ \epsilon_{rjs:l:i}
\label{eq:gstudy}
\end{align}

where $S_{rjsli}$ represents the score for Item $j$ given by Rater $r$, during class segment $s$ within lesson $l$ taught by teacher $i$. Here, $\mu$ is the overall mean, and the random effects capture the hierarchical variance. Our focal component for evaluating model annotations at the classroom transcript level is $\nu_{js:l:i}$, which encapsulates the variation attributable to the observable teacher skill for each segment after accounting for other sources of variation. This allows us to estimate BLUPs for the individual transcript sections seen by the models, thereby isolating the variation associated with the specific task of interest, $\hat{\nu}_{js:l:i}$, which is used in the correlation analyses. A breakdown of the notation for the model is below:

\begin{itemize}[nosep]
    \item $X_{rjsli}$ represents the rating for the $i$-th teacher, by the $r$-th rater,  on the $j$-th skill, during the $s$-th segment, for the $l$-th lesson.
    \item $\mu$ is the overall grand mean.
    \item $\nu_i$ is the random effect for the $i$-th teacher. Interpretation: some teachers receive higher ratings than others.  \footnote{For holistic evaluations, this might be considered the ``true score'' of a teacher's overall performance.}
    \item $\nu_r$ is the random effect for the $r$-th rater: some raters are more lenient than others.
    \item     $\nu_j$ is the random effect for the $j$-th skill item: some items are easier than others.
    \item $\nu_{rj}$ is the random effect for the interaction between rater and skill. Some raters score certain items higher.
    \item $\nu_{l:i}$ is the random effect for the $l$-th lesson within a teacher: some lessons receive higher ratings than others. Confounded with teacher overall score dependence on lessons.
    \item $\nu_{s:l:i}$ is the random effect for the $s$-th segment of the $l$-th lesson for the $i$-th teacher. A segment is the unit of one transcript: some lessons receive higher ratings than others. Confounded with lesson overall score dependence on segments.
    \item $\nu_{ji}$ is the random effect for the interaction between the skill and teacher: some teachers score higher on some skills. This component will be used to estimate BLUPs of the expected skill level for a teacher. 
    \item $\nu_{jl:i}$ is the random effect for the interaction between skill and lesson: some lessons score higher on some skills. 
    \item $\nu_{js:l:i}$ is the random effect for the interaction between skill and segment: some segments of lessons receive higher scores than others. This component will be used to estimate BLUPs at the transcript level. 
    \item $\nu_{ri}$ is the random effect for the interaction between rater and teacher: some raters score certain teachers higher.
    \item $\nu_{rl:i}$ is the random effect for the interaction between rater and lesson. Some raters score certain lessons higher.
    \item $\nu_{rji}$ is the random effect for the interaction between rater, skill, and teacher: some raters score certain teachers higher on some items.
    \item $\nu_{rjli}$ is the random effect for the interaction between rater, skill, and lesson: some raters score certain lessons higher on some items.
    \item $\epsilon_{rjsli}$ is the residual error term.
\end{itemize}

\subsubsection{Code Listing}
Below is the code used for the estimation of the model defined by Equation \ref{eq:gstudy} using \texttt{lme4} syntax. The model was estimated using restricted maximum likelihood (REML) with the BOBYQA optimizer. The variable names are the same found in the original study \cite{kane_national_2015}. Scores were minmax scaled to $[0,1]$ prior to estimation.

\begin{lstlisting}[language=R] %[htb]
% \begin{minted}{R}
lmer(SCORE ~ (1 | NCTETID)  + (1 | ITEM) 
+ (1 | RATERID) + (1 | RATERID:ITEM) 
+ (1 | RATERID:NCTETID) 
+ (1 | RATERID:OBSID:CHAPNUM) 
+ (1 | OBSID/CHAPNUM) 
+ (1 | OBSID:CHAPNUM:ITEM) 
+ (1 | OBSID:ITEM) 
+ (1 | NCTETID:ITEM) 
+ (1 | RATERID:ITEM:NCTETID) 
+ (1 | RATERID:OBSID:ITEM) 
+ (1 | RATERID:OBSID), data = df, 
control=lmerControl(optimizer="bobyqa"))
\caption{Target Teaching Skill Estimation Model Code}
\label{code:lmefit}
\end{lstlisting}

\paragraph{BLUP Extraction}
BLUPs were extracted \texttt{lme4::predict(model, re.form = ~ (1 | NCTETID:ITEM))} and \texttt{lme4::predict(model, re.form = ~ (1 | OBSID:CHAPNUM:ITEM))} for the respective $\nu_{ji}$ and $\nu_{js:l:i}$ 

\paragraph{Parameter Estimates}
Parameter estimates for the model are in Table \ref{tab:random_effects}.
\begin{table*}[htbp]
    \centering
    \caption{Mixed Effect Estimates for Teacher Skill Model}
    \label{tab:random_effects}
    \resizebox{0.85\textwidth}{!}{
        \begin{tabular}{lllrcc}
            \toprule
            \textbf{Parameter} & \textbf{Group} & \textbf{Effect Type} & \textbf{Coefficient} & \textbf{CI Low} & \textbf{CI High} \\
            \midrule
            (Intercept) & --- & Fixed & 0.46 & 0.33 & 0.58 \\
            \midrule
            Segment & $w_{k}$ & Random ($\sigma^2_{k}$) & 0.03 & -- & -- \\
            \cmidrule(lr){2-6}
            & RATERID:OBSID:ITEM & Random & 0.09 & -- & -- \\
            & RATERID:ITEM:NCTETID & Random & 0.03 & -- & -- \\
            & RATERID:OBS\_CHAPS & Random & 0.03 & -- & -- \\
            & RATERID:ITEM & Random & 0.05 & -- & -- \\
            & RATERID:OBSID & Random & 0.04 & -- & -- \\
            & RATERID:NCTETID & Random & 0.02 & -- & -- \\
            & OBSID & Random & 0.02 & -- & -- \\
            & Residual & Random & 0.16 & -- & -- \\
            & NCTETID & Random & 0.03 & -- & -- \\
            & RATERID & Random & 0.03 & -- & -- \\
            & ITEM & Random & 0.32 & -- & -- \\
            \bottomrule
        \end{tabular}
    }
\end{table*}

\section{Alternate Methods for Intended Impact Alignment: Refined VAM Residualization Model}\label{apx:vam}
\subsection{Refining VAM Measurements}
For the y-axis, we measure the alignment between transcript ratings and end of year value-added measures. To assess the alignment between Large Language Model (LLM) ratings of teacher skills and student learning gains, we employ teacher value-added measures (VAMs) derived from standardized assessments.  However, VAMs operate at the teacher or teacher-year level, while LLM ratings are generated at the finer-grained lesson-segment-by-teacher-skill level.  

Naïve aggregation of LLM ratings would be inappropriate due to the sparse and unbalanced nature of the data, as LLM evaluations were only available for short segments of lessons and for a subset of teacher skills.  To bridge this gap in granularity and isolate the relationship between LLM ratings and VAMs, we employ a two-stage approach involving a mixed-effects model and subsequent semipartial correlation analysis.

The residuals from this model, which represent the VAMs after accounting for confounding variables, are then used to compute semipartial correlations. We opted for semipartial correlations to ensure that the controls applied to the dependent variable (VAMs) remain constant across all LLMs, facilitating fair comparisons. 

To remove confounding variance from the VAM signal, we construct a comprehensive mixed-effects model to partition the variance in teacher VAMs, accounting for 1) variables that would contribute to VAM scores that are unrelated to a teacher's instructional practice and 2) variables that would mediate the relationship between an observer's score and VAM. This model allows us to isolate the residual variance in VAMs specifically attributable to observable instructional skills, along with any remaining measurement error. By residualizing the VAMs, we remove variation attributable to the representativeness of the lesson segment, and remove confounds potentially introduced by our decision to model with all the information (various VAMs and all 25 items), and account for the various programmatic, curricular, student population, and school-year related relationships to 

This methodological approach addresses several key challenges:

1. \textbf{Granularity mismatch}: By using lesson-segment level residuals, we preserve the fine-grained nature of LLM ratings while relating them to year-level VAMs.

2.\textbf{ Unbalanced observations}: Our approach accounts for the fact that LLM ratings typically cover only one section of a lesson, with little overlap between teachers observed.

3. \textbf{Preservation of meaningful variance}: The random effect for teachers, scaled by the difference between observed and expected performance, retains more variance for teacher observations that are closest to their typical performance.

4. \textbf{Ordinal nature of ratings}: The use of Kendall correlations acknowledges the ordinal nature of most teaching quality ratings and focuses on directional alignment rather than precise numeric agreement.

5. \textbf{Multiple sources of VAMs}: By incorporating multiple types of VA measures, we account for differences in assessment season relative to classroom observations.

\subsection{Value-added Measures}
Two complementary value-added measures are used at two levels of aggregation: VAMs calculated from formal state standardized assessments and another from study-specific, low-stakes and informal assessments, aggregated at the teacher-year level and across years. Although imperfect,  together the VAMs provide a more robust measure with the underlying teacher value-add.  The residualization model is estimated using all information from available VAM types.  Only VAM residuals corresponding to the same teacher-year as observed lesson transcript are used for the subsequent semipartial correlations.  

By stacking VAMs from both the state and the study-specific exams, we can correlate rater and LLM scores against the underlying teacher value-add as measured through these instrument. We can then use the correlation of these VAMs to correct for rater correlations against this underlying value-add.

\subsubsection{Controls for Confounding}
The analysis further controls for numerous confounding factors, including district, season of observation, grade level, class composition, teacher experience, class size, and subject-specific expertise, to isolate the effect of instructional quality on student outcomes.

\subsubsection{Controls for Divergence from Typical Teacher Skill}
To address the issue of differing levels of granularity, we leverage Best Linear Unbiased Predictors (BLUPs) extracted from the fully crossed random effects model described in the previous section (and explicitly defined in Eq, \ref{eq:gstudy}) to isolate the signal most relevant to interpreting the relationships between ratings of instructional quality and student learning.  

Specifically, we estimate two BLUPs with this model. The first, $\hat{\nu}_{jsli}$, was used in the previous section and estimates the skill level displayed by a teacher during a specific segment of a lesson.  The second, $\hat{\nu}_{ji}$, estimates the latent ability for a specific skill for a teacher across all observations. The difference between these, $\Delta_{si,j}= \hat{\nu}_{js:l:i} - \hat{\nu}_{j:i}$, provides a metric that captures the \textit{representativeness} of a particular lesson segment in relation to the teacher's overall skill profile. This difference is then used as a predictor in a linear mixed-effects model to adjust the VAMs for the representativeness of each lesson segment. 

The residualizing model is stylized as follows:

\begin{align*}
V_{dgs:l:ityvj} &= \beta_1 D_d + \beta_2 G_g + \beta_3 (D_d G_g) \\
&+ \boldsymbol{\beta_C} \mathbf{C}_{cgSyt} + \boldsymbol{\beta_S} \mathbf{S}_{dSyi} \\ &+ \boldsymbol{\beta_T} \mathbf{T}_{i} + \tau {\boldsymbol{M}}_{s:l}  
\\ &+  \boldsymbol{\nu}_{dglsv} [D_d \times (G_g \times Y_y  +  m_{s:l})] \\ &+  \boldsymbol{\nu}_{dglsv} [D_d \times (G_g \times Y_y  +  m_{s:l}) \times v ] \\
&+ \boldsymbol{\nu}_{dglsv} S_S+ \gamma_t\Delta_{si,j}  + \epsilon_{dgsyt}
\end{align*}
where:
\begin{itemize}[nosep]

    \item  $V_{dgsytj}$ represents the stacked teacher VAM for district $d$, grade $g$, school $S$, year $y$, and term $t$, incorporating multiple standardized assessment outcomes.  Stacking multiple VAMs increases the reliability of the overall teacher effectiveness measure and mitigates potential biases associated with the season of assessments relative to classroom observations.

    \item   $D_d$ fixed effects systematic differences in student achievement attributable to District- programmatic and curricular policies and practices.  The inclusion of the interaction term acknowledges the potential for district-specific effects by grade-level $G_g$ for curricular and programming decisions.

    \item  $\mathbf{C}_{cgSyt}$, $\mathbf{S}_{dSyt}$, and $\mathbf{T}_{i}$ represent vectors of class-, school-, and teacher-level covariates, respectively. These include prior achievement, student demographics, class size, school size, teacher experience, and measures of teacher knowledge.  These covariates control for factors that influence student achievement that would not be observable by raters of instruction or are not directly related to the teacher instructional quality.

    \item  $\mathbf{M}_{s:l}$, vectors of  lesson segment ($s$) within the lesson ($l$)-specific covariates with respect to the time during the school year, length of class, and the unrepresentativeness of the observation with respect to the teacher's average expected level of performance, $\Delta_{si,j}$.
    \begin{itemize}
        \item  $\Delta_{si,j}$ represents the difference between the teacher's observed skill level in a specific lesson segment ($\hat{\nu}_{jsli}$) and their overall mean skill level ($\hat{\nu}_{ji}$), calculated as $\Delta_{si,j}= \hat{\nu}_{jsli} - \hat{\nu}_{ji}$.  These Best Linear Unbiased Predictors (BLUPs) are derived from a separate, fully crossed mixed-effects model in Equation \ref{eq:gstudy} and incorporated here to weight each lesson segment according to its representativeness of the teacher's overall skill profile.  This approach allows us to preserve the variance contributed by lesson segments that are most indicative of the teacher's typical performance, supporting observation-level inference. 
    \end{itemize}
    \item  Random effects, $\boldsymbol{\nu}_{dgltsv}$, are crossed and indexed by district, grade, time during the school year, type of value-added measure  $\boldsymbol{\nu}_{dglsv} [D_d \times (G_g \times Y_y  +  m_{s:l}) \times ( 1+ v )   + S_S]$, account for the nested structure of the data and the potential correlations within districts and between different types of VAM ($v$).  Additionally, each school $S_S$ has a random slope to account for variance not attributable to the fixed effects. It is important to not use a school-level random effect since a fixed effect would distort the residuals, for example, in schools where all teachers were good
    \item The random slopes for each teacher $\gamma_t$ are with respect to $\Delta_{si,j}$,  allows for teacher-specific variation in the relationship between lesson segment representativeness and VAM, preserving the variation for those segments that are more representative of the teacher.

    \item  $\epsilon_{dgs:l:ityvj}$ represents the residual error, which retains the 1) variation in VAM attributable to each lesson segment, scaled by its representativeness vis-a-vis the teacher, whose effects on VAM are lessened as the segment diverges from the teacher's average, 2) measurement error, and 3) any remaining bias from omitted variables.

\end{itemize}

We represent the residualized VAM values for an individual teacher $\epsilon_{dgs:l:ityvj} = \tilde{V}_{sjv}$, for each unique nested segment-skill-VAM intersection. For use in the semi-partial correlations of the main portion of this study, we use only the subset of the residuals corresponding to the end of year VAMs on the state assessment and alternative assessment (original study) $v \in \{STA, ALT \}_y$, to better align teaching practices to same-year outcomes.

Below is the code listing for the model specification using the \texttt{lme4} software in \texttt{R}.

\begin{lstlisting}[language=R]

outcome ~ 0
       + DISTRICT*GRADE ## District fixed effects and grade fixed effect baselines
       + V_CS_ALT_IRT_M_TM1 ## class mean standardized student performance on BOY assessment
       + V_CS_STATE_STD_M_TM1 ## class mean standardized student performance on previous year math assessment
       + V_CS_STATE_STD_E_TM1 ## class mean standardized student performance on previous year English/reading assessment
       + V_CCLASS_SIZE ## class size
       + MAXCHAP ## length of class observed
       + V_CS_SPED ## class prop. of SPED students
       + V_CS_LEP  ## class prop. of English Learners
       + V_CS_FRPL ## class prop. of Low-SES students
       + ACCURACY_yr_est ## standardized measure of teacher accuracy in predicting their student's errors
       + KOSM_yr_est ## standardized measure of teacher knowledge of common student misconceptions
       + MATH_KNOWLEDGE_yr_est ## standardized measure of teacher knowledge of mathematics for instruction
       + RepDelta ## difference between the BLUP-estimated scores of this observation and BLUP-estimated teacher average for that observational item
       + TIMING ## season/month during the year of the observation
       + V_SS_FRPL ## school prop. of Low-SES students
       + V_SS_SPED ## school prop. of SPED students
       + V_SS_LEP ## school prop. of English Learners
       + V_SS_STATE_STD_M_TM1 ## school mean standardized student performance on previous year math assessment
       + V_SS_STATE_STD_E_TM1 ## school mean standardized student performance on previous year English/reading assessment
       + V_SSCHOOL_SIZE ## school size
       + (1|DISTRICT:GRADE:SCHOOLYEAR_SP) ## variations in programming, resources, and capacity in districts by grade and year
       + (1|ITEM:outvar) ## variation in how each teacher skill may be related to different VAMs
       + (1|DISTRICT:TIMING:outvar) ## variation with respect to the academic calendar and the timing of the assessments used in VAM (capturing district-level variation around test prep or instructional initiatives)
       + (1|DISTRICT:GRADE:SCHOOLYEAR_SP:outvar) ## variation of emphases and curricula for districts during different seasons of the year, with respect to the kind of outcome (capturing district-level variation around test prep or instructional initiatives)
       + (0 + RepDelta|NCTETID) ## variation by teacher, which have a random slope to control the teacher variation with difference between the BLUP-estimated scores of this observation and BLUP-estimated teacher average for a given teacher skill, preserving more of the teacher effect in the residual where the observation is more similar to the teacher's average.  
\end{lstlisting}

\subsection{Residualized VAM model parameters and results}
Figure \ref{fig:align_app}, which follows the same format and notation as Figure \ref{fig:alignment} and the following tables report out the results from the alternative estimation methods. Note the similarity in distribution shapes when estimated using the more sophisticated noise-controlling technique to simple alignment of just $tau$ from the main body.

\begin{table}[h]\label{tab:vamfit}
\centering
\caption{Measures of fit for Residualization}
\begin{tabular}{l|r}
\hline
Parameter & Fit\\
\hline

R2 (conditional) & 0.26\\
\hline
R2 (marginal) & 0.14\\
\hline
Sigma & 0.12\\
\hline
\end{tabular}
\end{table}

\begin{table}
\centering
\caption{Mean VAM-alignment estimates across models after noise control, including three additional items from replication study.}
\begin{tabular}{llr}
\toprule
Item & Estimate (CI) & p.value\\
\midrule
All & -0.156 (-0.205, -0.107) & 0\\
CLBM & -0.356 (-0.426, -0.287) & 0\\
CLINSTD & -0.139 (-0.21, -0.069) & 0\\
LANGIMP & -0.181 (-0.23, -0.131) & 0\\
REMED & -0.233 (-0.282, -0.183) & 0\\
\midrule
CLPC & -0.141 (-0.212, -0.071) & 0\\
EXPL & -0.212 (-0.261, -0.162) & 0\\
SMQR & -0.124 (-0.174, -0.074) & 0\\
\bottomrule
\end{tabular}
\end{table}

\begin{figure*}
    \centering
    \includegraphics[width=1\linewidth]{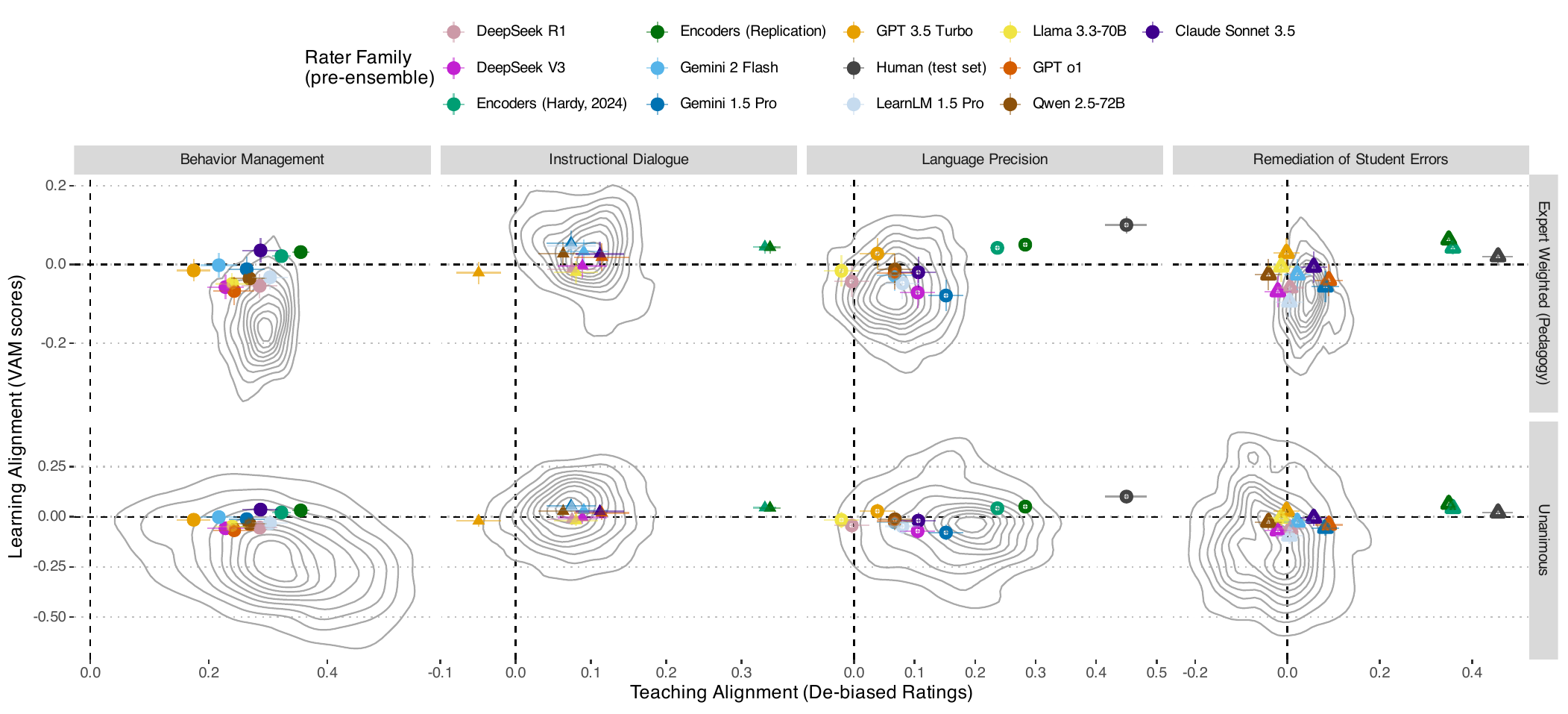}
    \caption{see the caption of Figure \ref{fig:alignment} for full description of details. The difference is that the axes have undergone the noise controlling transformations of this appendix prior to plotting.}
    \label{fig:align_app}
\end{figure*}

\begin{table*}
\centering
\begin{tabular}{lrlrr}
\toprule
Parameter & Coefficient & CI & t & p\\
\midrule
DISTRICT [11] & 0.41 & {}[0.15, 0.67] & 3.09 & 0.00\\
DISTRICT [12] & 0.20 & {}[-0.06, 0.46] & 1.54 & 0.12\\
DISTRICT [13] & 0.60 & {}[0.34, 0.86] & 4.52 & 0.00\\
DISTRICT [14] & 0.14 & {}[-0.12, 0.4] & 1.07 & 0.29\\
GRADE & -0.10 & {}[-0.16, -0.04] & -3.47 & 0.00\\
\addlinespace
V CS ALT IRT M TM1 & 0.14 & {}[0.14, 0.14] & 319.98 & 0.00\\
V CS STATE STD M TM1 & -0.02 & {}[-0.02, -0.02] & -34.29 & 0.00\\
V CS STATE STD E TM1 & -0.11 & {}[-0.11, -0.11] & -194.07 & 0.00\\
V CCLASS SIZE & 0.00 & {}[0, 0] & 9.34 & 0.00\\
MAXCHAP & 0.00 & {}[0, 0] & 66.59 & 0.00\\
\addlinespace
V CS SPED & 0.01 & {}[0.01, 0.01] & 11.58 & 0.00\\
V CS LEP & 0.03 & {}[0.03, 0.03] & 39.37 & 0.00\\
V CS FRPL & 0.03 & {}[0.03, 0.03] & 27.82 & 0.00\\
ACCURACY yr est & 0.01 & {}[0, 0.01] & 21.93 & 0.00\\
KOSM yr est & 0.00 & {}[0, 0] & -12.57 & 0.00\\
\addlinespace
MATH KNOWLEDGE yr est & 0.01 & {}[0.01, 0.01] & 88.31 & 0.00\\
samprep & 0.00 & {}[-0.01, 0] & -0.80 & 0.43\\
TIMING [WINTER] & 0.02 & {}[0.01, 0.04] & 2.51 & 0.01\\
TIMING [SPRING] & 0.03 & {}[0.01, 0.05] & 3.30 & 0.00\\
TIMING [FALL] & 0.04 & {}[0.02, 0.05] & 3.91 & 0.00\\
\addlinespace
V SS FRPL & 0.03 & {}[0.03, 0.04] & 25.00 & 0.00\\
V SS SPED & -0.16 & {}[-0.16, -0.15] & -90.17 & 0.00\\
V SS LEP & -0.02 & {}[-0.02, -0.02] & -15.78 & 0.00\\
V SS STATE STD M TM1 & -0.04 & {}[-0.04, -0.04] & -49.53 & 0.00\\
V SS STATE STD E TM1 & 0.05 & {}[0.05, 0.05] & 69.43 & 0.00\\
\addlinespace
V SSCHOOL SIZE & 0.00 & {}[0, 0] & -114.14 & 0.00\\
DISTRICT [12] × GRADE & 0.05 & {}[-0.03, 0.13] & 1.13 & 0.26\\
DISTRICT [13] × GRADE & -0.03 & {}[-0.11, 0.05] & -0.70 & 0.48\\
DISTRICT [14] × GRADE & 0.06 & {}[-0.02, 0.14] & 1.36 & 0.17\\
sd(NCTETID) & 0.04 &  & NA & NA\\
\addlinespace
sd(DISTRICT:GRADE:SCHOOLYEAR\_SP:outvar) & 0.02 &  & NA & NA\\
sd(DISTRICT:TIMING:outvar) & 0.03 &  & NA & NA\\
sd(DISTRICT:GRADE:SCHOOLYEAR\_SP) & 0.03 &  & NA & NA\\
sd(Residual) & 0.12 &  & NA & NA\\
\bottomrule
\end{tabular}
\end{table*}

\section{AI Use}
AI assistants (Gemini 2.5) was used during final revision to polish writing. We have mixed feelings about its ability to do the task well. We tried Claude for camera-ready polishing, which we mostly had to unpolish.

\end{document}